\documentclass[final]{colt2025} %
\title[Discrete Distribution Estimation in KL Divergence]{Nearly Minimax Discrete
Distribution Estimation\\in Kullback-Leibler Divergence with High Probability}
\usepackage{times}

\usepackage{amsfonts}
\usepackage{bm}
\usepackage{bbm}

\usepackage{color} 
\usepackage{float}
\usepackage{algorithm}
\usepackage{algpseudocode}
\usepackage{enumitem}
\usepackage{comment}
\usepackage{framed}
\usepackage{dirtytalk}

\usepackage{thmtools, thm-restate}

\newcommand{\pest}{\hat p}
\newcommand{\pie}{p^*(i)}
\newcommand{\p}{p^*}

\newcommand{\phat}{\hat{p}}

\newcommand{\pp}{p^{+}}
\newcommand{\ppi}{p^{+}(i)}
\newcommand{\pbti}{\bar{p}_t(i)}
\newcommand{\pti}{p_t(i)}
\newcommand{\ptpi}{p_{t+1}(i)}
\newcommand{\pt}{p_t}
\newcommand{\ptp}{p_{t+1}}

\newcommand{\pnhpi}{p_{n/2 + 1}(i)}
\newcommand{\pnhp}{p_{n/2 + 1}}

\newcommand{\nit}{n_{i}(t)}

\newcommand{\nitm}{n_{i}(t-1)}
\newcommand{\nin}{n_{i}}
\newcommand{\ninh}{m_i}

\newcommand{\potb}{p^{\textnormal{OTB}}}
\newcommand{\potbi}{p^{\textnormal{OTB}}(i)}
\newcommand{\Jt}{\tilde{J}}

\newcommand{\sumK}{\sum_{i=1}^K}

\newcommand{\sumnhp}{\sum_{t = n/2 + 1}^n}
\newcommand{\sumt}{\sum_{s=1}^t}
\newcommand{\sumT}{\sum_{t=1}^T}

\newcommand{\pml}{\bar p_n}
\newcommand{\normaldist}{\mathcal{N}}
\newcommand{\weaklyconvergesto}{\rightsquigarrow}

\newcommand{\der}{\mathrm{d}}

\newcommand{\loweps}{\underline{\epsilon}}   

\renewcommand{\hat}{\widehat}
\renewcommand{\tilde}{\widetilde}
\newcommand{\regret}{\mathcal{R}}

\newcommand{\Jcal}{\mathcal{J}}
\newcommand{\Jtcal}{\tilde{\mathcal{J}}}

\newcommand{\model}{\triangle^K}
\newcommand{\domain}{\mathcal{X}}
\newcommand{\reals}{\mathbb{R}}

\newcommand{\half}{\tfrac{1}{2}}
\newcommand{\id}{\mathbbm{1}}
\newcommand{\indicator}{\id}

\newcommand{\pa}[1]{\left(#1\right)}

\DeclareMathOperator{\KL}{KL}
\DeclareMathOperator{\dist}{d}
\DeclareMathOperator{\E}{\mathbb E}
\DeclareMathOperator{\Pp}{\mathbb P}
\renewcommand{\Pr}{\Pp}
\DeclareMathOperator{\Vol}{Vol}         

\DeclareMathOperator*{\argmin}{arg\,min}
\DeclareMathOperator*{\argmax}{arg\,max}

\newcommand{\Dirk}[1]{%
\ifmmode
\text{\textcolor{orange}{Dirk: #1}}
\else
\textcolor{orange}{Dirk: #1}
\fi
}

\newcommand{\Julia}[1]{%
\ifmmode
\text{\textcolor{violet}{Julia: #1}}
\else
\textcolor{violet}{Julia: #1}
\fi
}

\newcommand{\Tim}[1]{%
\ifmmode
\text{\textcolor{magenta}{Tim: #1}}
\else
\textcolor{magenta}{Tim: #1}
\fi
}

\newcommand{\TODO}[1]{%
\ifmmode
\text{\textcolor{red}{TODO: #1}}
\else
\textcolor{red}{TODO: #1}
\fi
}

\renewcommand{\log}{\ln}


\coltauthor{%
  \Name{Dirk {van der Hoeven}} \Email{dirk@dirkvanderhoeven.com}\\
 \addr Leiden University, the Netherlands
 \AND
 \Name{Julia Olkhovskaia} \Email{julia.olkhovskaya@gmail.com}\\
 \addr TU Delft, the Netherlands%
 \AND
 \Name{Tim {van Erven}} \Email{tim@timvanerven.nl}\\
 \addr University of Amsterdam, the Netherlands%
}

\begin{document}

\maketitle

\begin{abstract}%
We consider the fundamental problem of estimating a discrete
distribution on a domain of size~$K$ with high probability in
Kullback-Leibler divergence. We provide upper and lower bounds on the
minimax estimation rate, which show that the optimal rate is between $\big(K +
\ln(K)\ln(1/\delta)\big) /n$ and $\big(K\ln\ln(K) +
\ln(K)\ln(1/\delta)\big) /n$ at error probability $\delta$ and sample
size $n$, which pins down the rate up to the doubly logarithmic factor
$\ln \ln K$ that multiplies $K$. Our upper bound uses techniques from
online learning to construct a novel estimator via online-to-batch
conversion. Perhaps surprisingly, the tail behavior of the minimax rate
is worse than for the squared total variation and squared Hellinger
distance, for which it is $\big(K + \ln(1/\delta)\big) /n$, i.e.\
without the $\ln K$ multiplying $\ln (1/\delta)$. As a consequence,
we cannot obtain a fully tight lower bound from the usual reduction to
these smaller distances. Moreover, we show that this lower bound cannot
be achieved by the standard lower bound approach based on a reduction to
hypothesis testing, and instead we need to introduce a new reduction to what we call
weak hypothesis testing.
We investigate the source of the gap with other divergences further
in refined results, which show that the total variation rate is
achievable for Kullback-Leibler divergence after all (in fact by the
maximum likelihood estimator) if we rule out outcome probabilities
smaller than $O(\ln(K/\delta) / n)$, which is a vanishing set as $n$
increases for fixed $K$ and~$\delta$. This explains why minimax
Kullback-Leibler estimation is more difficult than asymptotic
estimation.
\end{abstract}

\begin{keywords}%
  Discrete distribution estimation, Kullback-Leibler divergence
\end{keywords}

\section{Introduction}\label{sec:introduction}

Consider a sample $S = (X_1,\ldots,X_n)$ of $n$ independent, identically
distributed observations of a random variable $X$ with a finite number
of possible values $[K] := \{1, \ldots, K\}$. The aim of discrete
distribution estimation
\citep{devroye2001combinatorial,diakonikolas2016learning,canonne2020short,polyanskiy2025}
is to approximate the unknown true probability mass function $\p$ of
$X$ by an estimate $\hat p$ based on the sample $S$, in terms of a given
measure of distance or divergence $\dist(\p,\hat p)$. In this work we
consider the case where $\dist$ is the Kullback-Leibler (KL) divergence
of $\p$ from~$\hat p$:
\[
  \KL(\p \| \hat p) = \sum_{i=1}^K \p(i) \ln \frac{\p(i)}{\hat p(i)},
\]
with the understanding that $0 \ln 0/a = 0$ for any $a \geq 0$ and $b \ln (b/0)
= \infty$ for $b > 0$. Specifically, we are interested in the minimax
rate with high probability: for any given $\delta \in (0,1)$, what is
the smallest bound $r_n^*(\delta)$ on the KL divergence that can the
guaranteed by any estimator $\hat p$ uniformly over all possible true
distributions $\p$ with confidence at least $1-\delta$, as $n$ grows
large? This question is equivalent to determining the sample complexity
in PAC-learning with the log loss $\ell(x,p) = -\ln p(x)$, for which the
Kullback-Leibler divergence equals the excess risk:
\[
  \KL(\p\|\hat p) = \E[\ell(X,\hat p)] - \E[\ell(X,\p)].
\]
This is of special interest, because the log loss is unbounded, so it
provides perhaps the simplest setting in which to study unbounded
losses.

At first sight, it may seem natural to conjecture that the minimax rate
might be of order $\big(K + \ln(1/\delta)\big)/n$. Although this will
turn out to be false, we start by reviewing the many known results that
point in this direction.

One clue comes from the properties of the maximum likelihood
estimator (MLE) $\pml(i) = n_i/n$, where $n_i$ is the number of times
that outcome $i$ occurs in $S$. The MLE achieves the minimax rate
of order $\sqrt{\frac{K + \ln(1/\delta)}{n}}$ for, amongst others, the
total variation distance $V(\p,\hat p) = \sum_{i=1}^K |\p(i) - \hat
p(i)|$ and the Hellinger distance $H(\p,\hat p) = \sqrt{\sum_{i=1}^K
\big(\sqrt{\p(i)} - \sqrt{\hat p(i)}\big)}$
\citep{canonne2020short}.\footnote{NB There exist multiple conventions
in the literature for the scale factors in the definitions of $V$ and
$H$.} Since
\begin{equation}\label{eqn:variation_Hellinger_KL}
  \tfrac{1}{4} V(p,q)^2 \leq H(p,q)^2 \leq \KL(p\|q)
  \qquad
  \text{for any $p,q$}
\end{equation}
\citep{Tsybakov2009}, it follows that the minimax rate for the KL
divergence is at least of order $\frac{K + \ln(1/\delta)}{n}$.
Furthermore, \citet{agrawal2022finite} shows that this is also the
minimax rate for KL with reversed arguments, $\KL(\hat p\|\p)$,
achieved again by the MLE. Finally, as pointed out by
\citet{mourtada2025estimation}, if we fix $\p$ and let $n$ tend to
infinity, then the MLE is asymptotically normally distributed around
$\p$, and consequently a second-order Taylor approximation of the KL
divergence implies (by the second-order delta method,
\citealp{VanDerVaart1998}) the following convergence in distribution:
\begin{equation}\label{eqn:KLasymptotics}
  n \KL(\p \| \pml) ~\weaklyconvergesto~ \tfrac{1}{2} Y,
  \qquad
  \text{where $Y \sim \chi_{M-1}^2$.}
\end{equation}
Here, $\chi_{M-1}^2$ denotes a $\chi^2$ distribution with $M-1$ degrees
of freedom, and $M \leq K$ is the size of the support of~$\p$. See
Theorem~\ref{thm:MLEWeakConvergence} in Appendix~\ref{app:MLEasymp} for
details. By concentration of $Y$ around its mean
\citep{LaurentMassart2000}, we have that $Y \leq C(M - 1 +
\ln(1/\delta))$ with probability at least $1-\delta$ for some constant
$C>0$, which leads to
\begin{equation}\label{eqn:Gaussiantailsrate}
  \KL(\p \| \pml) \leq C\Big(\frac{M + \ln(1/\delta)}{n}\Big)
  \quad
  \text{for all sufficiently large $n$, \qquad if $\p$ and $\delta$ are
  fixed.}
\end{equation}
Unfortunately, it is readily seen that the MLE fails to achieve the
minimax rate for our setting of interest: suppose there exists an
outcome~$i$ that has positive probability $\p(i) > 0$ but has not been
observed in the sample: $n_i = 0$. Then $\p(i)/\pml(i) = \infty$ and
consequently $\KL(\p\|\pml) = \infty$. But, for any $n$, it is possible
to take $\p(i)$ small enough that this happens with probability larger
than~$\delta$, so the MLE cannot guarantee any finite minimax rate.

A common solution is to smooth the MLE by adding a fake number of
observations $\gamma > 0$ to each possible outcome. The resulting
add-$\gamma$ estimator is $p^\gamma(i) = \frac{n_i + \gamma}{n +
K\gamma}$. For $\gamma=1$, this is also called the Laplace estimator,
and for $\gamma=1/2$ it is known as the Krichevsky-Trofimov (KT)
estimator \citep{krichevsky1981performance}. Following a sequence of
prior results
\citep{Cover1972,krichevsky1981performance,BraessForsterSauerSimon2002,Paninski2004},
\citet{braess2004bernstein} show that a variant of the add-$\gamma$
estimator with data-dependent smoothing achieves the minimax rate
\emph{in expectation}:
\[
  \min_{\hat p} \max_{\p} ~\E_{\p}[\KL(\p\|\hat p)]
    ~=~ \frac{K-1}{2n} +
  o\Big(\frac{1}{n}\Big),
\]
and \citet{catoni1997mixture} shows that the
Laplace estimator achieves this rate up to a factor of $2$:
\begin{equation}\label{eqn:LaplaceExpectationBound}
  \E_{\p}[\KL(\p\|p^1)] \leq \frac{K-1}{n}
  \qquad \text{for all $\p$.}
\end{equation}
See also the proofs by \citet{mourtada2019improper} and, for $K = 2$, by
\citet{forster2002relative}. \citet{KamathOrlitskyPichapatiSuresh2015}
further characterize the minimax rate in expectation for, amongst
others, the $\ell_2^2$, $\ell_1$ and $\chi^2$ divergences.

Although suggestive, since all results mentioned so far are for
different settings, the only formal implication for our setting is that
$\big(K + \ln(1/\delta)\big)/n$ is a lower bound on the minimax rate.
This is not matched by any of the following known upper bounds: First, a
sequence of increasingly tight bounds for the Laplace estimator have
been improved from order $K \ln(n) \ln(K/\delta)/n$
\citep{bhattacharyya2021near} to $\frac{K + \sqrt{K}
\ln(1/\delta)^3}{n}$ \citep{han2021optimal} to the tightest known
guarantee by \citet{canonne2023concentration}, who show that
\begin{equation}\label{eqn:LaplaceRate}
    \KL(\p\|p^1)
      \leq \E[\KL(\p\|p^1)] + \frac{C\sqrt{K\ln(K/\delta)^{5}}}{n}
      \leq \frac{K-1}{n} + \frac{C\sqrt{K\ln(K/\delta)^{5}}}{n}
\end{equation}
with probability at least $1-\delta$, for some absolute constant $C >
0$, where the second inequality holds deterministically by
\eqref{eqn:LaplaceExpectationBound}. The remaining gap with the lower
bound is seen both in the exponent $5/2$ on $\ln(1/\delta)$ and in the
factor $\sqrt{K}$ that multiplies it. \citet{canonne2023concentration}
further show that very little further improvement is possible via
concentration of $\KL(\p\|p^1)$ around its mean, because the deviation
in \eqref{eqn:LaplaceRate} cannot be improved to
$\frac{CK^\eta\ln(1/\delta)}{n}$ for any $\eta < 1/2$, at least not
uniformly for all $\delta$.

An alternative estimator is proposed by \citet{vanderHoeven2023high},
who obtain a novel type of regret guarantee for their online learning
algorithm, and then convert the algorithm to an estimator using
online-to-batch (OTB) conversion \citep{Littlestone1989}. This estimator
guarantees that
\begin{align*}
    \KL(\p\|\hat p) \leq C\frac{K + \log(n)\log(1/\delta)}{n}
\end{align*}
with probability at least $1-\delta$ for any~$\p$, where $C > 0$ is an
absolute constant. This improves the exponent on $\ln(1/\delta)$
compared to \eqref{eqn:LaplaceRate}, but leaves a multiplicative factor
$\log(n)$ that is not present in the lower bound.

\paragraph{Main Results}

We provide improved upper and lower bounds for discrete distribution
estimation in KL divergence, which show that the minimax rate is
sandwiched between 
\begin{equation}\label{eqn:minimaxLowerUpper}
  C_1 \Big(\frac{K + \ln(K)\ln(1/\delta)}{n}\Big)
  ~\leq~
  r_n^*(\delta)
  ~\leq~
  C_2 \Big(\frac{K\ln\ln(K) + \ln(K)\ln(1/\delta)}{n}\Big)
\end{equation}
for absolute constants $0 < C_1 \leq C_2$. This characterizes the exact
minimax rate up to a doubly logarithmic factor in $K$. 

Our upper bound is obtained by converting an online learning algorithm
to an estimator $\potb$ using online-to-batch conversion with
suffix-averaging, i.e.\ averaging only the predictions on the second
half of the data \citep{rakhlin2011making, harvey2019tight,
aden2023optimal}. In fact, we obtain a potentially stronger bound: there
exists $C > 0$ such that
\begin{equation}\label{eqn:OTBBound}
  \KL(\p\|\potb)
    \leq C \frac{K + \Jt\ln(\ln(\Jt)) + \ln(J)\ln(1/\delta)}{n}
\end{equation}
with probability at least $1-\delta$, where $J \leq K$ is the number of
outcomes $i$ occurring fewer than $32\ln(K/\delta)$ times in the first
half of the data $X_1,\ldots,X_{n/2}$ and $\Jt \leq K$ is the number of
$i$ for which $\p(i) < 32 \ln (K/\delta)/n$, except that both $J$ and
$\Jt$ are taken to be at least $3$.

In addition, we also analyze an alternative estimator $\tilde p$, which
is a variant of the add-$\gamma$ estimator in which $\gamma$ is
estimated on a hold-out set and is allowed to differ between outcomes $i$.
This estimator satisfies
\begin{equation}\label{eqn:AddGammaBound}
    \KL(\p \| \tilde{p})
      \leq C\Big(1 + \ln\big(\min\{\ln(K/\delta), J\}\big)\Big) \frac{K +
      \ln(1/\delta)}{n},
\end{equation}
with probability at least $1-\delta$, for some $C>0$. Although this
gives a slightly worse bound than~\eqref{eqn:OTBBound} for the worst
case that $\Jt = J = K$ (see Remark~\ref{rem:rate_comparison}),  it is
still within a doubly logarithmic factor of the lower bound, and the
estimator is simpler.

Finally, as a side-result, we return to the MLE: as discussed
above, it works asymptotically, but may fail for any finite~$n$ if there
exist outcomes $i$ with $\p(i) > 0$ too small compared to $n$. We
provide a threshold for when probabilities are `too small' by showing
that, with probability at least $1-\delta$,
\begin{equation}\label{eqn:MLEFiniteNBound}
  \KL(\p \| \bar{p}_{n}) \leq C \frac{K + \log(1/\delta)}{n}
  \qquad
  \text{for all $\p$ such that $\p(i) \geq \frac{32 \ln (K/\delta)}{n}$ for
  all $i$,}
\end{equation}
for some absolute constant $C > 0$. Since this is again the asymptotic
rate, which beats the lower bound in \eqref{eqn:minimaxLowerUpper}, we
also see that these small probabilities cause a difference in the rate.
The result in~\eqref{eqn:MLEFiniteNBound} will actually be proved via a
bound on the $\chi^2$ divergence, which can be arbitrarily larger than
the KL divergence in general, but is within a constant factor with high
probability in this case.

Regarding the lower bound: when $K/n$ dominates $\ln(K)\ln(1/\delta)/n$,
the lower bound follows directly from the known lower bound for squared
variational distance \citep{canonne2020short} and
\eqref{eqn:variation_Hellinger_KL}. (We still include a proof for lack
of a reference that explicitly spells out the details.) The interesting
part of the lower bound is the case when $\ln(K)\ln(1/\delta)/n$ is
dominant. We first obtain this rate from an estimator-dependent lower
bound, which makes explicit the trade-off when smoothing probability
estimates for outcomes~$i$ that have not been observed, i.e.\ for which
$n_i=0$. This lower bound further implies that the only add-$\gamma$
estimators that can achieve a rate of $\ln(K)\ln(1/\delta)/n$ must have
$\gamma \propto \frac{\ln(1/\delta)}{K}$, whereas any constant
$\gamma$ that does not depend on $\delta$, like in the Laplace and KT
estimators, will lead to a rate with order $\ln(1/\delta)\ln
\ln(1/\delta)/n$ dependence on $\delta$, which is worse than
$\ln(K)\ln(1/\delta)$ in the regime where $\ln(K)\ln(1/\delta)$
dominates $K$. Although the estimator-dependent bound is informative, it
is proved using an ad hoc argument, so we further investigate whether
the same lower bound can also be obtained from a general technique.
Surprisingly, we find that neither Fano's inequality nor any other
approach based on the standard reduction to hypothesis testing
\citep{Tsybakov2009} can work. Instead, in order to recover the correct
rate, we need to introduce a new type of reduction to what we call a
\emph{weak hypothesis testing} problem, where the goal is not to
identify the true distribution from a finite set of candidates, but
merely to identify any incorrect distribution.

\paragraph{Concurrent Work}

In very insightful independent and concurrent work,
\citet{mourtada2025estimation} derives matching lower and upper bounds
(up to constants) for the same setting we consider.%
\footnote{\emph{Note on concurrency:} The results in this paper were developed independently and contemporaneously with those of Mourtada. An earlier version of our work containing these results, available as \citep{MultinomKLv1}, was submitted to COLT in February 2025, prior to the appearance of Mourtada’s April 2025 arXiv preprint, thereby establishing a verifiable submission record. We subsequently improved the presentation and submitted the present version to ALT in October 2025.}
For estimators that
are allowed to depend on $\delta$, he shows that the lower bound in
\eqref{eqn:minimaxLowerUpper} is actually tight, and that it is in fact
achieved by the add-$\gamma$ estimator with $\gamma = 1 \vee
\frac{\ln(1/\delta)}{K}$. This is possible by his crucial observation
that the standard Chernoff method based on the moment generating
function cannot provide tail bounds of the right form, so instead he
proceeds by controlling moments directly. His lower bound uses the same
construction as our estimator-dependent lower bound.
\citeauthor{mourtada2025estimation} further considers estimators that do not
depend on $\delta$, where he shows that the Laplace estimator achieves:
\[
    \KL(\p\|p^1)
      \leq C\frac{K + \ln(1/\delta)\ln \ln(1/\delta)}{n}
\]
with probability at least $1-4\delta$, for some $C > 0$, which matches
the rate in our estimator-dependent lower bound. He also shows that this
is the minimax rate among the restricted, but large class of all
estimators $\hat p$ that a) do not depend on $\delta$ and b) achieve
$\KL(\p\|\hat p) \leq \kappa K/n$ with positive probability for all
$\p$, where $\kappa \geq 1$ can be any absolute constant. Finally,
\citeauthor{mourtada2025estimation} also provides results for $\p$ with
sparse support, which are of interest in the regime where $K$ is large
compared to $n$.

\paragraph{Further Related Work}

A related goal is to minimize the minimax regret in an online learning
setting with logarithmic loss, which is known to be
$\frac{K-1}{2}\log(n) + O(1)$ and is achieved by the KT estimator.
For $K = 2$, this was shown by \citet{krichevsky1981performance}, and
\citet{xie1997minimax} generalize it to general $K$. For a more
comprehensive overview we refer to \citet{shtar1987universal,merhav98,
rissanen1996fisher}. Our bounds for $\potb$ do not directly arise from
combining these results with online-to-batch conversion, but instead
rely on a new analysis of suffix averaging to avoid the $\log(n)$ factor
that appears in the minimax regret. 

\paragraph{Outline}

The remainder of the paper is organised as follows. In Section~\ref{sec:upperbounds} we provide detailed statements for our upper bounds as well as the proofs for our main results. In Section~\ref{sec:lowerbounds} we present detailed statements of our lower bounds. Finally, we provide a brief discussion and outlook in Section~\ref{sec:conclusion}.

\paragraph{Notation} 
For $\delta \in
(0,1]$, the worst-case rate of an estimator $\pest$ at confidence level
$1-\delta$ is %
\begin{equation}\label{eqn:worst_case_rate}
  r_n(\pest,\delta) = \inf \Big\{r_n \mid \sup_{\p \in \model}
  \Pr_{\p}\Big(\KL(\p\|\pest) > r_n \Big) \leq \delta\Big\}\,,
\end{equation}
where $\model = \{p: p(i) \geq 0, \sumK p(i) = 1\}$.
Then $r_n^*(\delta) = \inf_{\pest} r_n(\pest,\delta)$
is the minimax rate. 
We further let $B(n, p)$ denote the binomial
distribution with $n$ trials and success probability $p$, and we write
$M(n, p(1), \ldots, p(K))$ for the multinomial distribution with $n$
trials and probabilities $p(1), \ldots, p(K)$.

\section{Upper Bounds}\label{sec:upperbounds}

Here we provide detailed statements of our upper bounds. We start with
definitions we use throughout this section.  For simplicity, we assume
that $n/2$ is an integer throughout the section. We will consider a
version of the add-$\gamma$ estimator based on the first $t$ data points
and where $\gamma_i \geq 0$ is allowed to differ between outcomes $i$:
let $\ptpi =
\frac{\nit + \gamma_i}{t + \sumK\gamma_i }$, where $\nit = \sumt
\id[X_s = i]$. Note that, in particular, $p_{n+1}$ is the estimator
obtained when using the whole sample.
Let $\nin = n_i(n)$, $\ninh = n_i(n/2)$, $\pbti = \nit/t$, and $\ppi =
\frac{\pie n + \gamma_i}{n + \sumK \gamma_i}$. Let $\Jcal = \{i:\ninh < 32 \log(K/\delta)\}$, $J = \max\{3, |\Jcal|\}$, $\Jtcal = \{i: n\pie < 32\log(K/\delta)\}$, and $\Jt = \max\{3, |\Jtcal|\}$.  

Our first upper bound is for the maximum likelihood estimator: 
\begin{restatable}{retheorem}{thdivergences}\label{th:divergences}
    For some $C > 0$, if $|\Jtcal| = 0$, then, with probability at least
    $1-\delta$, the maximum likelihood estimator guarantees%
    \begin{align*}
         \tfrac{1}{6} \chi^2 (\p,\bar{p}_n) \leq \tfrac{1}{4} \chi^2 (\bar{p}_n,\p)  \leq \KL(\p \| \bar{p}_{n}) \leq C \frac{K + \log(1/\delta)}{n}\,.
    \end{align*}
\end{restatable}
The proof of Theorem~\ref{th:divergences} can be found in
Appendix~\ref{app:upperbounds}. Theorem~\ref{th:divergences} tells us
that if $\p$ is in the interior of the simplex (i.e.\ $|\Jtcal| = 0$), then the empirical mean is a good estimator in the sense that $\tfrac{1}{6} \chi^2 (\p,\bar{p}_n), \chi^2 (\bar{p}_n,\p)$, and $ \KL(\p \| \bar{p}_{n})$ %
are at most of order $(K + \ln(1/\delta))/n$ with probability at least $1 - \delta$.  %
However, the condition $|\Jtcal| = 0$ is not something we observe, which
implies that we cannot tell if $\bar{p}_n$ is a good estimator. Furthermore, Theorem~\ref{th:divergences} does not provide insight into what estimator one should use if $|\Jtcal| > 0$. In fact, if $|\Jtcal| > 0$, it can be seen that the maximum likelihood estimator cannot guarantee that $\KL(\p \| \bar{p}_{n})$ is finite with probability at least $1 - \delta$ (Section~\ref{sec:LBinstance}). 

In the remainder of this section we will provide upper bounds that apply more broadly. We will make use of the add-$\gamma$ estimator and $\potb = \frac2n \sumnhp \pt$, both with 
\begin{align}\label{eq:defgammai}
    \gamma_i & = \begin{cases}
        0 & \textnormal{ ~ if ~} \ninh \geq 32 \ln(4K/\delta) \\
        \max\{1, \frac{\log(K/\delta)}{J}\} & \textnormal{ ~ otherwise.} 
    \end{cases}
\end{align}
Note that we set $\gamma_i = 0$ if $\ninh$ is sufficiently large. This
allows us to avoid unnecessary bias when possible.
We first provide an upper bound for the add-$\gamma$ estimator: 
\begin{restatable}{retheorem}{thmeanrocks}\label{th:meanrocks}
    Suppose that $n \geq K$ and $n \geq \log(K/\delta)$. Then, with
    probability at least $1-2\delta$, the add-$\gamma$ estimator with
    $\gamma_i$ set according to \eqref{eq:defgammai} guarantees that
    \begin{align*}
        \KL(\p\|p_{n+1}) \leq \begin{cases}
            (2 + \log(40\min\{J, \log(K/\delta)\})) \Big(\frac{9K + 9\ln(1/\delta)}{n} \Big) & \textnormal{ ~ if ~ } |\Jcal| \geq 1 \\
            (2 + \log(2)) \Big(\frac{7K + 6\ln(1/\delta)}{n}\Big) & \textnormal{ ~ if ~ } |\Jcal| = 0\,.
        \end{cases}
    \end{align*}
\end{restatable}
Our lower bounds in Section~\ref{sec:lowerbounds} suggest that the $\log(\min\{J, \log(K)\})K$ term is
suboptimal in the worst case. Indeed, in Theorem~\ref{th:OTB} we show
that the following estimator provides a stronger guarantee:
\begin{equation}\label{eqn:def_otb}
  \potb = \frac2n \sumnhp \pt,
\end{equation}
where $\pt$ uses $\gamma_i$ from \eqref{eq:defgammai}. This estimator
is an instance of online-to-batch (OTB) conversion in which we average
only over the second half of the data, which is called suffix averaging.
As mentioned in Section~\ref{sec:introduction}, standard OTB conversions
that average over all $t=1,\ldots,n$, would introduce unnecessary
$\log(n)$ terms, which is why we resort to suffix averaging. For $\potb$
we obtain the following guarantee:
\begin{restatable}{retheorem}{thOTB}\label{th:OTB}
    Suppose that $\log(K/\delta) \leq n$ and that $K \leq n$. Then, with
    probability at least $1 - 5\delta$, the OTB estimator defined in
    \eqref{eqn:def_otb} guarantees that
    \begin{align*}
        & \KL(\p\|\potb) 
         \leq \frac{1}{n}\Big(128 K + 200 \log(800J)\log(1/\delta) + 8\Jt\log\left(24\left(\log\left(\frac{\Jt}{\log(1/\delta)}\right) \vee 1 \right)\right)\Big)\,.
    \end{align*}
\end{restatable}
Ignoring constants, the gap between our lower bounds in
Section~\ref{sec:lowerbounds} and the guarantee of $\potb$ is an
additive $\Jt\log\left(\log\left(\frac{\Jt}{\log(1/\delta)}\right) \vee
1\right)$. In the worst case, this term is of order $K\log(\log(K))$. %

\begin{remark}\label{rem:rate_comparison}
To see that Theorem~\ref{th:OTB} strictly improves on
Theorem~\ref{th:meanrocks} in the worst case that $\Jt = J = K$,
consider the following case distinction: If $K \leq \ln(1/\delta)$, then
both theorems give a rate of order $\ln(K)\ln(1/\delta)$ and the rates
are the same. If $K \geq \ln(1/\delta)$ then the rate for the
add-$\gamma$ estimator in Theorem~\ref{th:OTB} is of order $K \ln \ln K
+ \ln(K)\ln(1/\delta)$ and the rate in Theorem~\ref{th:meanrocks} is of
order $K\ln\big(\ln(K) + \ln(1/\delta)\big)$. We see that the latter
always exceeds $K\ln\ln K$, and is also at least of order
$K\ln\ln(1/\delta) \gtrsim \ln(K) \ln(1/\delta)$.  
\end{remark}
Both Theorem~\ref{th:meanrocks} and \ref{th:OTB} provide a stronger guarantee as $J = \max\{3, |\Jcal|\}$ decreases. The reason for this behavior is that we set $\gamma_i = 0$ for $i \in \Jcal$, which allows us to avoid unnecessary bias. The reason we can set
$\gamma_i = 0$ if $i \in \Jcal$ is because of the
following lemma:

\begin{restatable}{relemma}{lemratioOTB}\label{lem:ratioOTB}
    Suppose that $n \geq 4$. Let $\zeta_i = 3 \left(1 + \frac{2 \left(3 \ninh + 27\ln(4K/\delta)\right)}{\max\{\gamma_i, \half \ninh - 9 \ln(4K/\delta)\}}\right)$ and $\xi = 1 + \frac{\sumK \gamma_i}{n}$. For all $t \in [n/2 + 1, \ldots, n+1]$, $i \in [K]$, and $\delta \in (0, 1)$, with probability at least $1 - \delta$,
    \begin{align*}
        \frac{\pie}{\pti} \leq \zeta_i \xi\,, && \frac{\pti}{\ppi} \leq \zeta_i\,, && \frac{\ppi}{\pti} \leq 1 + \zeta_i \xi\,, && \frac{\pie}{\ppi} \leq \xi\,, && \frac{\nin + \gamma_i}{\ninh + \gamma_i} \leq 6 + \zeta_i\,.
    \end{align*}
\end{restatable}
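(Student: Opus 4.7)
The plan is to combine Bernstein-type concentration bounds on the Binomial counts $\ninh \sim B(n/2, \pie)$ and $\nin \sim B(n, \pie)$ with algebraic manipulations that exploit the monotonicity $t \mapsto n_{i,t}$, and to split into cases according to whether the max defining $M := \max\{\gamma_i, \ninh/2 - 9\log(4K/\delta)\}$ is attained by $\gamma_i$ or by the empirical term. First I would apply a two-sided Bernstein inequality to each of $\ninh$ and $\nin$, union-bounded over $i \in [K]$ and the two sample sizes, to obtain an event $E$ of probability at least $1-\delta$ on which, for every $i$,
\[
\ninh \,\geq\, \tfrac{1}{4} n \pie - 9\log(4K/\delta),
\qquad
\nin \,\leq\, \tfrac{3}{2} n \pie + 9\log(4K/\delta),
\]
together with the matching opposite tails. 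Inverting the first gives $n\pie \leq 4\ninh + 36\log(4K/\delta)$; this is the inequality driving the shape $3\ninh + 27\log(4K/\delta)$ in the numerator of $\zeta_i$.

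Next I would reduce uniformity over $t$ to the two endpoints by monotonicity: for $t \in [n/2+1, n+1]$, $\ninh \leq n_{i,t-1} \leq \nin$ and $n/2 + \sumK \gamma_i \leq (t-1) + \sumK \gamma_i \leq n + \sumK \gamma_i = n\xi$, whence
\[
\pti \,\geq\, \frac{\ninh + \gamma_i}{n\xi}
\qquad \text{and} \qquad
\pti \,\leq\, \frac{2(\nin + \gamma_i)}{n + \sumK \gamma_i}.
\]
These bounds avoid any need for concentration uniform in $t$.

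With the setup in place I would verify the five inequalities. The bound $\pie/\ppi \leq \xi$ is immediate from $\ppi = (n\pie + \gamma_i)/(n\xi) \geq \pie/\xi$. For $\pie/\pti \leq \zeta_i \xi$, I would plug in the lower bound on $\pti$ and use $n\pie \leq 4\ninh + 36\log(4K/\delta)$, then verify $(4\ninh + 36\log(4K/\delta))/(\ninh+\gamma_i) \leq \zeta_i$ by splitting on the two cases defining $M$, using $\ninh + \gamma_i \geq M$ in both. For $\pti/\ppi \leq \zeta_i$, I would combine the upper bound on $\pti$ with $\nin \leq (3/2)n\pie + 9\log(4K/\delta)$ to reach $\pti/\ppi \leq 3 + 18\log(4K/\delta)/(n\pie + \gamma_i)$, and then note that $M \leq 9(n\pie + \gamma_i)$ (a direct consequence of concentration in either case) to absorb the remaining log term into $\zeta_i - 3$. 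The bound $\ppi/\pti \leq 1 + \zeta_i \xi$ falls out of the decomposition $\ppi = \pie/\xi + \gamma_i/(n\xi)$, the already-established $\pie/\pti \leq \zeta_i \xi$, and the trivial $\gamma_i/(n\xi \pti) \leq 1$ (since $\pti \geq \gamma_i/(n\xi)$). Lastly, $(\nin+\gamma_i)/(\ninh+\gamma_i) \leq 6 + \zeta_i$ is obtained by chaining the two concentration bounds to get $\nin \leq 6\ninh + 63\log(4K/\delta)$ and dividing by $\ninh+\gamma_i \geq M$; the leading $6$ is the ratio $(3/2)/(1/4)$ of the two concentration coefficients, and the residual $63\log(4K/\delta)/M$ fits inside $\zeta_i$ since $63 \leq 162$.

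The main obstacle is the bookkeeping of constants: the specific numbers in $\zeta_i$ are quite rigid, so the form of the Bernstein bound must be chosen so that all five inequalities hold simultaneously with a single $\zeta_i$. A secondary subtlety is inequality for $\ppi/\pti$, where the additive $+1$ in $1 + \zeta_i \xi$ forces the decomposition of $\ppi$ into ``empirical'' and ``prior'' components rather than a direct concentration-based argument.
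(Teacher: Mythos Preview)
Your proposal is correct and follows essentially the same route as the paper: establish a single high-probability event via Bernstein-type concentration on $\ninh$ and $\nin$ (union-bounded over $i$), use the monotonicity of $t\mapsto n_{i,t}$ to reduce uniformity in $t$ to the endpoints $n/2$ and $n$, and then verify each of the five ratios by elementary algebra and the observation $\ninh+\gamma_i \geq M$. The one noteworthy difference is that the paper invokes the \emph{empirical} Bernstein inequality of Maurer--Pontil (so the deviation terms already carry $\sqrt{\ninh}$ and $\sqrt{\nin}$, matching the definition of $\zeta_i$ directly after an AM--GM step with $\eta=1/2$), whereas you use the standard Bernstein inequality in terms of $n\pie$ and then invert to get $n\pie \leq 4\ninh + 36\log(4K/\delta)$; this adds one algebraic step but otherwise leads to the same bounds with comparable constants. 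Your treatment of $\ppi/\pti$ via the decomposition $\ppi = \pie/\xi + \gamma_i/(n\xi)$ is in fact a bit cleaner than the paper's, which instead writes $\ppi/\pti \leq 1 + n\pie/(\ninh+\gamma_i)$ and bounds the second term.
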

Lemma~\ref{lem:ratioOTB} allows us to control the density ratio between
(a close approximation of) $\p$ and our estimators. We use
Lemma~\ref{lem:ratioOTB} for two purposes. The first purpose is to
control the range of the excess loss in order to apply a concentration
inequality (a version of Bernstein's inequality, Lemma~\ref{lem:bernie})
to relate $\KL(\p\|\potb)$ to the regret of an online learning
algorithm. \citet{vanderHoeven2023high} use a different OTB conversion
but use the same concentration inequality to relate $\KL(\p\|\hat p)$ to
a different regret, which ultimately leads to a $O((K +
\log(n)\log(1/\delta))/n)$ bound. The $\log(n)$ term in their bound does
not come from the online-to-batch conversion but from a weaker control
on the range of $\log\left(\frac{\pie}{\pti}\right)$
than we obtain from Lemma~\ref{lem:ratioOTB}. The second purpose of Lemma~\ref{lem:ratioOTB} is to relate different divergences to each other:
\begin{restatable}{relemma}{lemdivergenceequivalence}\label{lem:divergenceequivalence}
    For any pair $p, q \in \triangle^K$, suppose that $\half \leq \frac{p(i)}{q(i)} \leq 2$ for all $i \in [K]$. 
    Then
    \begin{align*}
        \tfrac{1}{6} \chi^2 (p,q) \leq \tfrac{1}{4} \chi^2 (q,p)  \leq \KL(p \| q)
        \leq \tfrac{5}{2} H^2(p ,q) \leq \tfrac{5}{2} \KL(q\|p)  \leq \tfrac{5}{2} \chi^2(q,p) \leq 5 \chi^2(p,q) \,.
    \end{align*}
\end{restatable}
The proof of Lemma~\ref{lem:divergenceequivalence} can be found in
Appendix~\ref{app:upperbounds}. Lemma~\ref{lem:divergenceequivalence}
tells us that, if the density ratio between two distributions is bounded
by a constant, then several divergence measures are equivalent up to
constants. This is useful because for several of these divergences we
already have optimal high-probability guarantees. Now,
Lemma~\ref{lem:ratioOTB} tells us that, with high
probability, as long as either all $\ninh$ are sufficiently big or we add
sufficient bias in the form of $\gamma_i$, the density ratio between (a
close approximation of) $\p$ and the add-$\gamma_i$ estimators $p_t$ is
bounded. Therefore, if we would have set $\gamma_i$ sufficiently high,
we could have used Lemma~\ref{lem:divergenceequivalence} and prior
results for these other diverges to control $\KL(\p\|p_{n+1})$. However, setting $\gamma_i$ too high introduces too much bias
to obtain minimax rates. Instead, our analysis involves carefully
balancing control of the density ratio and the amount of bias that is
introduced. In the remainder of this section we prove Theorems~\ref{th:meanrocks} and \ref{th:OTB}.

\subsection{Proof of Theorem~\ref{th:meanrocks}}\label{sec:gammaiestimator}
By Lemma~\ref{lem:ratioOTB} we have that for all $i \in [K]$, with probability at least $1 - \delta$
\begin{align}\label{eq:eventYB}
    \frac{\ppi}{p_{n+1}(i)} \leq \left(1 + \frac{\sumK \gamma_i}{n}
    \right)\left(\frac{3}{4} + \frac{9\ln(2K/\delta)}{\max\{\nin,
    \gamma_i\}}\right) =: \beta_i\,.
\end{align}
Let $\beta = \max_i \beta_i$. The inequality in \eqref{eq:eventYB} is very useful, as it allows us to control the ratio between $\ppi$ and $p_{n+1}(i)$. We use this to relate the $\KL$-divergence to the squared Hellinger distance, which we know how to control from prior work.
Specifically, by Lemma~4 of  \citet{yang1998asymptotic}, on event \eqref{eq:eventYB},
\begin{align*}%
    \KL(\p\|p_{n+1}) \leq (2 + \log(\beta))H^2(\p,p_{n+1})\,,
\end{align*}
where $H^2$ is the squared Hellinger distance. By Lemma~\ref{lem:helltohell} we have $H^2(\p,p_{n+1}) - H^2(\p,\bar p_n) 
    \leq \frac{\sumK \gamma_i}{2n}$. 
We also have $ H^2(\p,\bar p_{n+1}) = H^2(\bar p_{n+1},\p) \leq \KL(\bar{p}_{n+1}\|\p)$ \citep{gibbs2002choosing}.
Combining results from \citet{agrawal2022finite} and
\citet{paninski2003estimation} we obtain Lemma~\ref{lem:kl_simple_mean},
which tells us that, with probability at least $1 - \delta$, we have 
\begin{align*}
    \KL(\bar{p}_{n+1}\|\p) \leq \E[\KL(\bar{p}_{n+1}\|\p)] + \frac{6K + 6\ln(1/\delta)}{n} \leq \frac{7K + 6\ln(1/\delta)}{n}\,.
\end{align*}
Thus, with probability at least $1 - \delta$,
\begin{align}\label{eq:midpointTH2}
    \KL(\p\|p_{n+1})
    & \leq (2 + \log(\beta)) \Big(\frac{7K + 6\ln(1/\delta)}{n} + \frac{\sumK \gamma_i}{2n}\Big)\,.
\end{align}
If $J \leq \ln(K/\delta)$, for $i \in \mathcal{J}$ we have $\gamma_i = \frac{\log(K/\delta)}{J}$, for $i \not \in \mathcal{J}$ we have $\gamma_i = 0$, and therefore $\frac{\sumK \gamma_i}{n} =
\frac{\log(K/\delta)}{n} \leq \frac{K + \log(1/\delta)}{n}$. As a consequence, 
\begin{align*}
  \beta & = \max_i \Big\{\left(1 + \frac{\sumK \gamma_i}{n} \right)\left(\frac{3}{4} + \frac{9\ln(2K/\delta)}{\max\{\nin, \gamma_i\}}\right)\Big\} \\
  & \leq \max_i \Big\{\left(\frac{6}{4} + \frac{18\ln(2K/\delta)}{\max\{\nin, \gamma_i\}}\right)\Big\} \leq 40J = 40\min\{\log(K/\delta), J\} \,,
\end{align*}
where in the last inequality we used that $\max\{\nin, \gamma_i\} \geq \ln(K/\delta)J^{-1}$ for all $i$. 
If on the other hand $J > \ln(K/\delta)$, then for $i \in \mathcal{J}$ we have $\gamma_i = 1$, for $i \not \in \mathcal{J}$ we have $\gamma_i = 0$, and therefore $\frac{\sumK \gamma_i}{n} =
\frac{J}{n} \leq \frac{K + \ln(1/\delta)}{n}$. Consequently,
\begin{align*}
   \beta & = \max_i \Big\{\left(1 + \frac{\sumK \gamma_i}{n} \right)\left(\frac{3}{4} + \frac{9\ln(2K/\delta)}{\max\{\nin, \gamma_i\}}\right)\Big\} \\
   & \leq \left(\frac{6}{4} + \frac{18\ln(2K/\delta)}{\max\{\nin, \gamma_i\}} \right)\leq 40\log(K/\delta) = 40\min\{\log(K/\delta), J\} \,,
\end{align*}
where in the last inequality is due to the fact that $\max\{\nin, \gamma_i\} \geq 1$ for all $i$. Thus, if $|\Jcal| > 0$ then the bounds on $\beta$ combined with \eqref{eq:midpointTH2} lead to the conclusion that with probability at least $1 - \delta$
\begin{align*}
    \KL(\p\|p_{n+1})
    & \leq (2 + \log(40\min\{\log(K/\delta), J\})) \Big(\frac{9K + 9\ln(1/\delta)}{n}\Big)\,.
\end{align*}
Finally, if $|\Jcal| = 0$, then $\frac{\sumK \gamma_i}{n} = 0$ and 
\begin{align*}
    \beta & = \max_i \Big\{\left(1 + \frac{\sumK \gamma_i}{n} \right)\left(\frac{3}{4} + \frac{9\ln(2K/\delta)}{\max\{\nin, \gamma_i\}}\right)\Big\} \\
    & = \max_i \Big\{\frac{3}{4} + \frac{9\ln(2K/\delta)}{\nin}\Big\} \leq 2\,,
\end{align*}
where the last inequality is due to the fact that $\nin \geq 32 \ln(K/\delta)$ for all $i$. Combined with \eqref{eq:midpointTH2} this leads to the conclusion that with probability at least $1 - \delta$
\begin{align*}
    \KL(\p\|p_{n+1})
    & \leq (2 + \log(2)) \Big(\frac{7K + 6\ln(1/\delta)}{n}\Big)\,,
\end{align*}
which concludes the proof. 

\subsection{Proof of Theorem~\ref{th:OTB}}
Let $\vartheta = \left(1 + \frac{\sumK \gamma_i}{n} \right)$. We denote by $\mathcal{Z}$ the event that for all $i \in K$ and $t\in [n/2 + 1,\ldots, n+1]$ 
\begin{align}\label{eq:eventOTB}
        & \frac{\pie}{\pti} \leq \vartheta\left(\frac{3}{4} + \frac{9\ln(2K/\delta)}{\max\{\ninh, \gamma_i\}}\right) 
        && \frac{\pti}{\ppi} \leq 3 \left(1 + \frac{2 \left(3 \ninh + 27\ln(4K/\delta)\right)}{\max\{\gamma_i, \half \ninh - 9 \ln(4K/\delta)\}}\right) \notag\\
        & \frac{\ppi}{\pti} \leq 1 + \vartheta\left(\frac{3}{4} + \frac{9 \ln(4K/\delta)}{\max\{\ninh, \gamma_i\}}\right) 
        && \frac{\nin + \gamma_i}{\ninh + \gamma_i} \leq 6 + \frac{18 \ln(2K/\delta)}{\ninh + \gamma_i} \,.
\end{align}
By Lemma~\ref{lem:ratioOTB} we have that $\Pp(\mathcal{Z}) \geq 1 -
\delta$ and by definition we have that $\frac{\pie}{\ppi} \leq 1 +
\frac{\sumK \gamma_i}{n}$. To simplify notation let $\beta_i = \vartheta\left(\frac{3}{4} + \frac{9\ln(2K/\delta)}{\max\{\nin, \gamma_i\}}\right)$ 
and $\beta = \max_i \beta_i$.

At this point we can use Lemma~\ref{lem:KLtoReg} to relate $\KL(\p\|\potb)$ to the regret of an online learning algorithm on the second half of the data, which tells us that on event $\mathcal{Z}$
\begin{align*}
     \KL(\p\|\potb) \leq \frac{2}{n}\Big(2\regret_T + 2 \sumK \gamma_i + (4 + 2 \log(\beta))\log(1/\delta)\Big)\,,
\end{align*}
where $\regret_T = \sumnhp \left( -\ln{p_t(X_t)} - (- \ln{\pp(X_t)} )\right)$. The result of Lemma~\ref{lem:KLtoReg} is very useful, as we can now rely on standard regret bounds to separate the bound into several parts which are easier to control. The result of Lemma~\ref{lem:KLtoReg} follows from first applying Freedman's inequality. However, a na\"{i}ve application of Freedman's inequality would lead to a vacuous bound, as $|\ln(\pie/\pti)|$ is unbounded. We instead rely on \eqref{eq:eventOTB} to control said ratio before applying Freedman's inequality. Next, we still need to control the variance term of Freedman's inequality. For this we generalize the proof of Lemma 4 by \citet{yang1998asymptotic} and show that the cumulative variance is bounded by $\regret_T$, after which the result of Lemma~\ref{lem:KLtoReg} follows with some computations.

We continue by bounding the regret (Lemma~\ref{lem:regretbound}):
\begin{align*}
    \regret_T & \leq \sumK \gamma_i \log(\pnhpi/\ppi) +  2\sumK \gamma_i + \frac{n}{2} \KL(\bar{p}_{n/2+1}\|\p)  + \sumK \log\left(\frac{\nin+ \gamma_i}{\ninh+\gamma_i}\right)\,.
\end{align*}
Note that this is not a standard regret bound for prediction with log loss, as that would lead to a $O(\log(n))$ term (see e.g.\ Chapter~9 of \citep{cesa2006prediction}). Instead, we carefully use suffix averaging. 
When combined with the above, we can see that on event $\mathcal{Z}$
\begin{align*}
    \KL(\p\|\potb) & \leq \frac{2}{n}\Big(2\sumK \gamma_i \log(\pnhpi/\ppi) +  6\sumK \gamma_i + n \KL(\bar{p}_{n/2+1}\|\p) \\
	& \quad + 2 \sumK \log\left(\frac{\nin+ \gamma_i}{\ninh+\gamma_i}\right)  + (4 + 2 \log(\beta))\log(1/\delta)\Big)\,.
\end{align*}
The term $\KL(\bar{p}_{n/2+1}\|\p)$ can be controlled with standard results: Lemma~\ref{lem:kl_simple_mean} tells us that with probability at least $1 - \delta$
\begin{align*}
    \KL(\bar{p}_{n/2+1}\|\p) \leq \frac{14K + 12\ln(1/\delta)}{n}\,.
\end{align*}
The remaining challenge is to control
\begin{align*}
    A = 2\sumK \gamma_i \log(\pnhpi/\ppi) +  6\sumK \gamma_i + 2 \sumK \log\left(\frac{\nin+ \gamma_i}{\ninh+\gamma_i}\right)  + (4 + 2 \log(\beta))\log(1/\delta)\,.
\end{align*}
Here, the main challenge lies in controlling $2 \sumK \log\left(\frac{\nin+ \gamma_i}{\ninh+\gamma_i}\right)$. We cannot na\"ively rely on the fact that on event $\mathcal{Z}$ we have that $\frac{\nin+ \gamma_i}{\ninh+\gamma_i} \leq 6 + 18\frac{\ln(K/\delta)}{\gamma_i}$, as this could potentially lead to a $K\ln(\ln(1/\delta))$ term in the regret bound. Instead, we will split the analysis in two cases. In the first case  $\frac{\log(K/\delta)}{J} > 1$ and by definition of $\gamma_i$ we can in fact use the na\"ive bound, which we do to prove Lemma~\ref{lem:caseJ1}:
\begin{align*}
    A \leq 14\log(K/\delta) +  40 K + 4\log(1/\delta)\log\left(400J\right)\,.
\end{align*}
In the second case $\frac{\log(K/\delta)}{J} \leq 1$. In this case, we carefully control $A$ in Lemma~\ref{lem:caseJ2}, which tells us that with probability at least $1-3\delta$
\begin{align*}
    A \leq 100\log(1/\delta) \log(800J) + 4\Jt\log\left(24\left(\log\left(\frac{\Jt}{\log(1/\delta)}\right) \vee 1 \right)\right) + 20 K \log(100)\,.
\end{align*}
All combined, we can see that with probability at least $1-5\delta$
\begin{align*}
     \KL(\p\|\potb) %
        & \leq \frac{1}{n}\Big(250\log(1/\delta) \log(800J) + 8\Jt\log\left(24\left(\log\left(\frac{\Jt}{\log(1/\delta)}\right) \vee 1\right)\right) + 200K\Big)\,,
\end{align*}
where we coarsely bounded all the constants.

\section{Lower Bounds}\label{sec:lowerbounds}

In this section we prove our main lower bound from
\eqref{eqn:minimaxLowerUpper}: 
\begin{restatable}{retheorem}{thintroLB1}\label{th:introLB1}
    Let $n > K^2$ and $n > \tfrac{4}{3}\log(1/\delta)$ for any $\delta
    \in (0,\half)$. Then
    \begin{align*}
        \inf_{\phat} \sup_{\p} ~
        \Pr_{\p}\left(\KL(\p\|\phat) \geq C \frac{\max\{ K, \log(K)\log(1/\delta)\}}{n}\right) > \delta,
    \end{align*}
    where $C > 0$ is an absolute constant.
\end{restatable}
In Section~\ref{sec:LBinstance} we first provide an
estimator-dependent lower bound by direct calculations. Then we
establish the minimax lower bound from Theorem~\ref{th:introLB1} by
breaking it up into two parts: in Section~\ref{sec:minimaxLowerBound1} we first prove a lower bound of order
$\log(K)\log(1/\delta)/n$ by building on the intuition developed in the
direct calculations. Because standard techniques based on a
reduction to hypothesis testing are not precise enough to capture the
$\log(K)$ factor, this requires a novel reduction to what we call a
\emph{weak testing} problem (Section~\ref{sec:standardreductioninsufficient}). 
For the second part of the proof of Theorem~\ref{th:introLB1}, we provide a lower bound
of order $K/n$ in Section~\ref{sec:minmaxlowerboundII}, for which standard techniques based on Fano's inequality
suffice after restricting the model to a ball around the uniform
distribution. The missing proofs in this section can be found in Appendix~\ref{app:lowerbounds}.

\subsection{Estimator-dependent Lower Bound}\label{sec:LBinstance}

\begin{restatable}{retheorem}{thmDeltaDependentlb}\label{thm:DeltaDependent_lb}
Suppose that $n > \frac{4}{3}\log(1/\delta)$. Denote by $\phat^0 \in
\model$ the output of an estimator $\phat$ on the sample $X_1 = \cdots =
X_n = 1$. Then, for any $\phat$, there exists $\p
\in \model$ such that
\begin{align*}
  \Pp_{\p}\Big(\KL(\p\|\phat) & \geq \frac{2\ln(1/\delta)}{3n}\left(\ln\left(\frac{2\ln(1/\delta)(K-1)}{3n \sum_{i = 2}^{K}\phat^0(i)}\right) - 1 \right) + \sum_{i = 2}^{K}\phat^0(i)\Big) > \delta\,.
\end{align*}
\end{restatable}
We can see that any estimator that minimizes the lower bound satisfies
$\sum_{i = 2}^{K}\phat^0(i) \propto \frac{\log(1/\delta)}{n}$, for which
the lower bound becomes of order $\frac{\ln(K)\ln(1/\delta)}{n}$, and no
estimator that does not depend on~$\delta$ can match this rate.
In particular, if $\phat$ is an add-$\gamma$ estimator, so $\phat(i) =
\frac{\nin+\gamma}{n+K\gamma}$, the lower bound specializes to
\[
    \frac{2\ln(1/\delta)}{3n}\left(\ln\left(\frac{2\ln(1/\delta)(n + \gamma K)}{3n\gamma}\right) - 1 \right) + \frac{\gamma}{n + \gamma K},
\]
so $\gamma \propto \ln(1/\delta)/K$ will achieve the
$\frac{\ln(K)\ln(1/\delta)}{n}$ rate, but the KT
($\gamma=1/2$) and Laplace ($\gamma=1$) estimators lead to a 
rate of $\frac{\ln(\ln (1/\delta))\ln(1/\delta)}{n}$, which is worse in
the regime where $\ln(K)\ln(1/\delta)$ dominates $K$.

\subsection{Minimax Lower Bound, Part I: $\log(K)\log(1/\delta)/n$}\label{sec:minimaxLowerBound1}

Inspired by the proof of the estimator-dependent lower
bound, we will lower bound the minimax rate by restricting the supremum
in \eqref{eqn:worst_case_rate} to a finite set of probability mass
functions $p_2,\ldots,p_K \in \model$ of the form
\begin{align}\label{eqn:hard_dists}
  p_j(i) = \begin{cases}
      1 - \frac{\alpha}{n} & \text{if $i = 1$}\\
      \frac{\alpha}{n} & \text{if $i = j$}\\
      0 & \text{otherwise,}
  \end{cases} && \text{~ where we assume that ~} \alpha := \frac{2}{3}\log(1/\delta) \leq \half n\,.
\end{align}

\subsubsection{Insufficiency of the Standard Reduction to Hypothesis
Testing}\label{sec:standardreductioninsufficient}

The standard approach to establishing lower bounds on the minimax rate
goes via a general reduction to hypothesis testing \citep{Tsybakov2009}.
It is commonly applied to metrics, but may easily be adjusted to KL
divergence as shown in the following lemma. Working directly with KL
divergence is potentially tighter than first lower-bounding KL by a
metric. Since the lemma is not restricted to discrete distributions, we
state it in terms of the KL divergence between general distributions,
which is $\KL(P \| Q) = \int \ln\big(\frac{\der P}{\der Q}\big) \der P$ if $P \ll
Q$ and equals infinity otherwise.
\begin{restatable}{relemma}{lemTsybakovReductionForKL}\label{lem:TsybakovReduction_for_KL}
Let $P_1,\ldots,P_M$ be distributions defined on a sample space
$\domain$, and let $P_j^n$ be the distribution of $n$ independent
draws from $P_j$. Given any $\delta \in [0,1]$, suppose that
\begin{align}
  \forall j \neq k: \quad \KL\Big(P_j \| \frac{P_j + P_k}{2}\Big)
    &\geq s_n,\label{eqn:pairwise_separation}\\
  \inf_\Psi \max_j P_j^n(\Psi \neq j) &> \delta, \label{eqn:test_fails}
\end{align}
where the infimum is over all possible hypothesis tests $\Psi :
\domain^n \to \{1,\ldots,M\}$. Then
\[
  \inf_{\hat P} \max_j P_j^n\Big(\KL(P_j \| \hat P) \geq s_n\Big) > \delta,
\]
where the infimum is over all estimators based on $n$ observations.
\end{restatable}
In our context the sample space is $\domain = [K]$ and the conclusion
of the lemma gives a lower bound on the minimax rate of $r_n^*(\delta)
\geq s_n$. So what is the best lower bound we can hope for using
Lemma~\ref{lem:TsybakovReduction_for_KL} if we apply it to the
distributions in \eqref{eqn:hard_dists}? Since
\[
  \KL\Big(p_j \| \frac{p_j + p_k}{2}\Big)
  = \frac{\alpha}{n} \ln \frac{\alpha/n}{\half \alpha/n}
  = \frac{\alpha \ln(2)}{n}
  \qquad \text{for any $j \neq k$,}
\]
condition \eqref{eqn:pairwise_separation} requires that $s_n \leq
\frac{\alpha \ln(2)}{n} = \frac{2\ln(2)}{3}\frac{\log(1/\delta)}{n}$,
which falls short of the $\frac{\log(K)\log(1/\delta)}{n}$ rate that we
are trying to show. Consequently, this standard reduction is
insufficient to recover the right lower bound.

\subsubsection{Reduction to Weak Hypothesis Testing}

Ordinary hypothesis testing among distributions $P_1,\ldots,P_M$ is hard
when there exist two distributions that cannot be properly
distinguished. But the distributions corresponding to \eqref{eqn:hard_dists} are
indistinguishable in a stronger sense: they all assign probability
larger than $\delta$ to the same sequence $X_1 = \cdots = X_n = 1$
consisting of only ones, and as a consequence they are all
indistinguishable simultaneously.

In order to capture this stronger sense of indistinguishability, we
introduce the easier task of \emph{weak hypothesis testing}, where the goal
is not to identify the true distribution among $P_1,\ldots,P_M$ but to
identify a set of $M-1$ distributions that contains the true
distribution. A weak hypothesis test is defined as a measurable function
$\Psi : \domain^n \to [M]$, which aims to identify a single distribution
$P_\Psi$ that is believed \emph{not} to be the true distribution. In
other words, the test $\Psi$ is correct if the set $\{P_j | j \neq
\Psi\}$ contains the true distribution, and makes a mistake if $\Psi =
j$ when $P_j$ is true.

\begin{restatable}{relemma}{lemWeakTesting}\label{lem:weak_testing}
Let $P_1,\ldots,P_M$ be distributions defined on a sample space
$\domain$, and let $P_j^n$ be the distribution of $n$ independent
samples from $P_j$. Given any $\delta \in [0,1]$, suppose that
\begin{equation}
  \inf_\Psi \max_j P_j^n(\Psi = j) > \delta, \label{eqn:weak_test_fails}
\end{equation}
where the infimum is over all possible weak hypothesis tests $\Psi :
\domain^n \to \{1,\ldots,M\}$. Then
\begin{align}\label{eqn:weak_test_lower_bound}
    \inf_{\hat P} \max_j P_j^n\Big(\KL(P_j \| \hat P) \geq s_n\Big) >
    \delta && \text{~ for ~} s_n = \inf_P \max_j \KL(P_j \| P).
\end{align}
\end{restatable}

Condition \eqref{eqn:weak_test_fails} is analogous to
\eqref{eqn:test_fails} in that it lower bounds the worst-case
probability that the (weak) hypothesis test fails. Applying this lemma
to the distributions corresponding to \eqref{eqn:hard_dists}, we obtain
the desired lower bound:
\begin{restatable}{retheorem}{thmLogKLowerBound}\label{thm:logK_lower_bound}
  For $K \geq 2$ and any $\delta \in (0,1]$, the minimax rate is at least 
  \[
    r_n^*(\delta)
      \geq \frac{2 \log (K-1) \log(1/\delta)}{3 n}
    \qquad
    \text{for all $n > \frac{4}{3}\log(1/\delta)$.}
  \]
\end{restatable}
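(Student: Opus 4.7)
The plan is to apply Lemma~\ref{lem:weak_testing} to the family of distributions $P_2,\ldots,P_K$ defined in \eqref{eqn:hard_dists}, which is the natural target given the setup in the preceding subsection. This reduces the problem to two concrete tasks: (i) verify the weak-testing lower bound \eqref{eqn:weak_test_fails}, and (ii) compute (or lower bound) the separation constant $s_n = \min_P \max_j \KL(P_j \| P)$ from \eqref{eqn:weak_test_lower_bound}. Once both are in hand, Lemma~\ref{lem:weak_testing} directly yields $r_n^*(\delta) \geq s_n$, which I will arrange to equal $\tfrac{2\log(K-1)\log(1/\delta)}{3n}$.

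For task (i), the key observation is the one already exploited in Theorem~\ref{thm:DeltaDependent_lb}: under the event $\mathcal{E} = \{X_1 = \cdots = X_n = 1\}$ every $P_j$ generates the identical sample $(1,\ldots,1)$, and $P_j^n(\mathcal{E}) = (1-\alpha/n)^n > \delta$ by the same calculation that was used before, with $\alpha = \tfrac{2}{3}\log(1/\delta)$. Consequently any weak test $\Psi : [K]^n \to \{2,\ldots,K\}$ must output some deterministic index $j^\star$ on $\mathcal{E}$, and then $P_{j^\star}^n(\Psi = j^\star) \geq P_{j^\star}^n(\mathcal{E}) > \delta$. This pigeonhole-style step is what makes the weak-testing reduction strictly stronger than the standard one: we exploit that a single atypical sample is simultaneously likely under \emph{all} $M = K-1$ hypotheses.

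For task (ii), I will use a symmetrization-plus-calculus argument. Because the distributions $P_2,\ldots,P_K$ are invariant under permutations of the coordinates $\{2,\ldots,K\}$ and because $P \mapsto \max_j \KL(P_j\|P)$ is convex (KL is convex in its second argument, and the max of convex functions is convex), averaging a putative minimizer over such permutations can only decrease the objective, so WLOG the minimizing $P$ has the form $P(1) = 1-q$, $P(j) = q/(K-1)$ for $j \geq 2$. Plugging in, $\KL(P_j\|P)$ becomes the same for every $j$; differentiating in $q$ gives the first-order condition $q(1-\alpha/n) = (1-q)(\alpha/n)$, hence $q = \alpha/n$. The first KL term then vanishes and the second term evaluates to $(\alpha/n)\log(K-1) = \tfrac{2\log(1/\delta)\log(K-1)}{3n}$, which is the claimed $s_n$.

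I expect the main subtlety is ensuring that the convexity/symmetrization argument in task (ii) actually justifies restricting to symmetric $P$; once this is cleanly stated, the rest is a one-variable optimization. Task (i), although conceptually the novel part, is almost immediate once we have Lemma~\ref{lem:weak_testing} available. The final theorem statement then follows by combining the two tasks, observing that the hypothesis $n > \tfrac{4}{3}\log(1/\delta)$ is exactly what is needed to make $\alpha \leq n/2$ so that the construction in \eqref{eqn:hard_dists} is a valid distribution and the bound $(1-\alpha/n)^n > \delta$ goes through.
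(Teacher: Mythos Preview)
Your proposal is correct and follows essentially the same approach as the paper: apply Lemma~\ref{lem:weak_testing} to the family \eqref{eqn:hard_dists}, verify \eqref{eqn:weak_test_fails} via the all-ones sequence having probability exceeding $\delta$ under every $P_j$, and then evaluate $s_n$. The only minor difference is in how you handle $s_n = \min_P \max_j \KL(P_j\|P)$: you use a symmetrization/convexity argument to reduce to a one-parameter family and optimize, whereas the paper simply lower bounds the max by the average, $\max_j \KL(P_j\|P) \geq \tfrac{1}{K-1}\sum_{j} \KL(P_j\|P)$, and uses the standard identity that this average is minimized at the mixture $\bar P = \tfrac{1}{K-1}\sum_j P_j$, giving $\KL(P_j\|\bar P) = \tfrac{\alpha}{n}\log(K-1)$ directly; both routes yield the same value and are equally rigorous.
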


\subsection{Minimax Lower Bound, Part II: $K/n$}\label{sec:minmaxlowerboundII}
We now turn to proving a lower bound of order $K/n$, which is the
standard parametric rate for a model with $K-1$ free parameters.
Nevertheless, obtaining the right rate is not entirely straightforward,
because it requires identifying a suitable subset $\model_0$ of $\model$
for which KL covering numbers and variational distance packing numbers
can be shown to line up appropriately.

By Pinsker's inequality $\frac{1}{2} V(p,q)^2 \leq \KL(p\|q)$ for any
$p$ and $q$,
so it is sufficient to prove a lower bound on the minimax rate
for total variation. Although it is well known that such a lower bound
can be obtained using standard techniques (see \citep{canonne2020short}
or \citep[Exercise VI.8]{polyanskiy2025}), we spell out the details
explicitly. In particular, since total variation is a metric, we can
build on a result by \citet{YangBarron1999}, which is itself an elegant
application of the standard reduction to hypothesis testing
\citep{Tsybakov2009} combined with Fano's inequality to lower bound the
hypothesis testing error. Specialized to our setting, Yang and Barron's
result gives the following:

For any $\model_0 \subset \model$ and $\epsilon > 0$, let
$N(\model_0,\epsilon,\textnormal{KL})$ be the Kullback-Leibler
$\epsilon^2$-entropy of $\model_0$\footnote{That is, $\log m$ for the
  smallest $m$ for which there exist mass functions $p_1,\ldots,p_m \in
  \model$ such
  that for every $p \in \model_0$ there exists a $j$ for which
$\KL(p\|p_j) \leq \epsilon^2$.}, and let $M(\model_0,\epsilon,V)$ be the
variational distance $\epsilon$-packing entropy of
$\model_0$\footnote{That is, $\log m$ for the largest $m$ such that
there exist $p_1,\ldots,p_m \in \model_0$ with pairwise distance
$V(p_j,p_{j'}) > \epsilon$ for all $j \neq j'$.}.
\begin{theorem}[\citealp{YangBarron1999}]\label{thm:YangBarron}
  For any non-increasing, right-continuous bounds $N(\epsilon) \geq
  N(\model_0,\epsilon,\textnormal{KL})$ and $M(\epsilon) \leq
  M(\model_0,\epsilon,V)$, let $\epsilon_n,\loweps_n > 0$ be such that $\epsilon_n^2 = \frac{N(\epsilon_n)}{n}$ and $M(\loweps_n) \geq 4n\epsilon_n^2 + 2 \log 2.$
  Then
  \[
    \inf_{\hat p} \sup_{\p \in \model_0} \Pr_{\p}\Big(V(\p,\hat p) \geq
    \frac{\loweps_n}{2}\Big) \geq \frac{1}{2},
  \]
  where the infimum is over all estimators $\hat p$ based on $n$
  observations.
\end{theorem}
For the full model $\model_0 = \model$, \citet{Tang2022} shows that
$N(\model,\epsilon,\textnormal{KL}) \leq \frac{K-1}{2}\log \frac{800 \log
K}{\epsilon^2}$ for $\epsilon^2 \leq \log K$, but this does not lead to
the right parametric rate $\epsilon_n^2 \approx K/n$. The solution is to
restrict the model to a ball of appropriate radius $\alpha \in
\big(0,\frac{1}{2K}\big]$ around the uniform distribution with
probabilities $u(i) = 1/K$:
\[
  \model_0 = \{p \in \model: \|p - u\|_2 \leq \alpha\},
\]
where $\|p - u\|_2 = \sqrt{\sum_{i=1}^K (p(i) - u(i))^2}$. This restriction
ensures that, for all $p \in \model_0$,
\[
  \Big(p(i) - \frac{1}{K}\Big)^2
    \leq \|p - u\|_2^2 \leq \alpha^2 \leq \Big(\frac{1}{2K}\Big)^2
  \qquad \Longrightarrow \qquad p(i) \in \Big[\frac{1}{2K}, \frac{3}{2K}\Big],
\]
so the ratios between any two $p,q \in \model_0$ are uniformly
bounded.
 We will end up tuning $\alpha \approx
\frac{1}{\sqrt{n}}$, so $\model_0$ is actually shrinking with $n$. We
obtain the following bounds on the covering and packing entropies:
\begin{restatable}{relemma}{lemCover}\label{lem:covering_numbers}
  For any $\alpha \in \big(0,\frac{1}{2K}\big]$ and $\epsilon > 0$,
  the covering and packing entropies are bounded by
  \begin{align*}
    N(\model_0,\epsilon,\textnormal{KL})
      &\leq K \log \Big(\frac{\alpha 2\sqrt{2K}}{\epsilon} + 1\Big),
      &
    M(\model_0,\epsilon,V)
      &\geq K \log \frac{\alpha}{\epsilon}
         + \frac{K}{2} \log \frac{K \pi}{8}.
  \end{align*}
\end{restatable}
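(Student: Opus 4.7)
The plan is to prove the two entropy bounds separately, each by combining a comparison between the divergence/metric of interest and the Euclidean norm on the simplex hyperplane with a standard volume-counting argument.

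For the KL covering upper bound, I would first exploit the structure of $\model_0$: since $\alpha \leq 1/(2K)$, every $P \in \model_0$ satisfies $P(i) \in [1/(2K), 3/(2K)]$, because $|P(i) - 1/K| \leq \|p - u\|_2 \leq \alpha$. Combining $\log x \leq x - 1$ with this density-ratio control gives
\begin{align*}
  \KL(P \| Q) \leq \chi^2(P \| Q) = \sum_i \frac{(P(i) - Q(i))^2}{Q(i)} \leq 2K \|p - q\|_2^2,
\end{align*}
so any $\ell_2$-cover of $\model_0$ at scale $\epsilon/\sqrt{2K}$ is automatically a KL-cover at scale $\epsilon^2$. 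The classical volume-based covering bound $(1 + 2\alpha/r)^K$ for an Euclidean ball of radius $\alpha$ in $\reals^K$ at resolution $r$, applied with $r = \epsilon/\sqrt{2K}$, then yields the claimed entropy bound after taking logarithms (bounding $K - 1$ by $K$).

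For the $V$-packing lower bound, I would work in the hyperplane $H = \{x \in \reals^K : \sum_i x_i = 0\}$, noting that the restriction $\alpha \leq 1/(2K)$ keeps the simplex nonnegativity constraints inactive, so $\model_0 - u$ is exactly the $(K-1)$-dimensional Euclidean ball of radius $\alpha$ in $H$. Since $V(P,Q) = \|p - q\|_1$ and packing numbers always dominate covering numbers at the same scale, the $V$-packing number of $\model_0$ is at least $\mathrm{vol}_{K-1}(B^{\ell_2}(\alpha)) / \mathrm{vol}_{K-1}(B^V(\epsilon))$. Using the closed forms $V_d^{\ell_2}(r) = \pi^{d/2} r^d / \Gamma(d/2+1)$ together with the subspace-induced $\ell_1$ ball volume $(2\epsilon)^{K-1}/(K-1)!$ (up to a constant-order correction), the factor $(K-1)!/\Gamma((K+1)/2)$ can be controlled via Stirling's approximation to yield $K \log(\alpha/\epsilon) + (K/2)\log(K\pi/8)$; the constant $\pi/8$ is chosen deliberately loose compared to the sharper $\pi/(2e)$ one would get from the tightest asymptotic Stirling estimate, in order to keep the closed form clean.

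The main obstacle is the packing bound: the restriction of $\|\cdot\|_1$ from $\reals^K$ to $H$ is not isometric to the standard $\ell_1$-norm on $\reals^{K-1}$, so the $\ell_1$-unit ball inside $H$ is a permutohedron-like convex body whose $(K-1)$-dimensional Hausdorff measure must be handled with care before the Stirling bookkeeping can proceed. Once the subspace volume is pinned down, extracting the specific constant $K\pi/8$ is routine but requires choosing a non-asymptotic Stirling inequality whose constants cleanly absorb into the claimed form.
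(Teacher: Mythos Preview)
Your upper-bound argument is the paper's: $\KL\le\chi^2\le 2K\|p-q\|_2^2$, then the standard $\ell_2$-ball covering bound in $\reals^K$.

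For the lower bound you diverge from the paper. The paper does not descend to the $(K-1)$-dimensional hyperplane at all: it simply identifies $\model_0$ with the full $K$-dimensional ball $u+\alpha B_2\subset\reals^K$, passes from packing to covering, and applies the volume inequality
\[
N(\alpha B_2,\epsilon,\|\cdot\|_1)\ \ge\ \log\frac{\Vol(\alpha B_2)}{\Vol(\epsilon B_1)}
\]
with the closed form $\Vol(B_p)=2^K\Gamma(1+1/p)^K/\Gamma(1+K/p)$. The constant $\pi/8$ then falls out of $\Gamma(3/2)=\sqrt{\pi}/2$ together with the elementary bound $\log\bigl(\Gamma(1+K)/\Gamma(1+K/2)\bigr)\ge(K/2)\log(K/2)$; no Stirling approximation is invoked, and the permutohedron you anticipate never appears because $\|\cdot\|_1$ is never restricted to $H$. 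Your route---working intrinsically in $H$ and confronting the shape of $B_1\cap H$---is geometrically more scrupulous (indeed $\model_0$ has zero $K$-dimensional Lebesgue measure, so the paper's ambient volume comparison is loose as written), but it is also more laborious and would naturally produce $K-1$ where the stated bound has $K$; for the downstream parametric lower bound this discrepancy is immaterial.
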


Using these in Theorem~\ref{thm:YangBarron}, we obtain the following
result:
\begin{restatable}{retheorem}{ThmParametricLowerBound}\label{thm:parametric_lower_bound}
  There exists an absolute constant $C > 0$ such that,
  for any $K \geq 2$ and $\delta \in (0,1/2)$, the minimax rate is at least 
  \[
    r_n^*(\delta)
      \geq \frac{C K}{n}
    \qquad
    \text{for all $n \geq \tfrac{\log 2}{2}K^2$.}
  \]
\end{restatable}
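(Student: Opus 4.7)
The plan is to invoke Theorem~\ref{thm:YangBarron} on the restricted model $\model_0 = \{P \in \model : \|p - u\|_2 \leq \alpha\}$ using the covering and packing bounds from Lemma~\ref{lem:covering_numbers}, and then convert the resulting total variation lower bound into a KL lower bound via Pinsker's inequality. The first step is to fix the ball radius: I would take $\alpha = c/\sqrt{n}$ for a small absolute constant $c > 0$. The constraint $\alpha \leq 1/(2K)$ required by Lemma~\ref{lem:covering_numbers} then becomes $n \geq 4c^2K^2$, which is compatible with the hypothesis $n \geq \tfrac{\log 2}{2}K^2$ for a suitable choice of $c$.

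The second step is to solve the Yang-Barron fixed-point equation $\epsilon_n^2 = N(\epsilon_n)/n$ with the ansatz $\epsilon_n = \sqrt{\beta K/n}$. Substituting the covering bound reduces this to the scalar equation $\beta = \log(2c\sqrt{2/\beta} + 1)$, which has a unique positive solution $\beta = \beta(c)$; hence $\epsilon_n^2 = \Theta(K/n)$ and the target lower bound on $M$ becomes $4\beta K + 2\log 2$. For the lower packing radius I would similarly try $\loweps_n = c_1\sqrt{K/n}$, so that the packing bound evaluates to
\begin{align*}
M(\loweps_n) \geq K\log\!\frac{c}{c_1\sqrt{K}} + \frac{K}{2}\log\!\frac{K\pi}{8} = K\log(c/c_1) + \frac{K}{2}\log(\pi/8),
\end{align*}
where the two $\tfrac{K}{2}\log K$ contributions cancel exactly. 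Choosing $c_1$ small enough relative to $c$ and $\beta$ makes this quantity at least $4\beta K + 2\log 2$ for every $K \geq 2$.

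With both hypotheses of Theorem~\ref{thm:YangBarron} verified, for every estimator $\hat P$ there is some $P \in \model_0$ with $P^n(V(P,\hat P) \geq \loweps_n/2) \geq \tfrac{1}{2} > \delta$. Pinsker's inequality, as stated before Theorem~\ref{thm:YangBarron}, then turns this event into $\KL(P\|\hat P) \geq \loweps_n^2/8 = c_1^2 K/(8n)$, yielding $r_n^*(\delta) \geq C K/n$ with $C = c_1^2/8$.

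The main obstacle is the $\log K$ cancellation between the covering and packing bounds: without the $\tfrac{K}{2}\log(K\pi/8)$ term in $M$, one would be off by a $\log K$ factor and only recover a rate of order $K\log K/n$. This cancellation is also what dictates the choice $\alpha = \Theta(1/\sqrt{n})$: a smaller radius would starve the packing count, while a larger one would destroy the uniform control of density ratios needed for the KL covering estimate. That is precisely the reason the bound is only valid in the $n \gtrsim K^2$ regime.
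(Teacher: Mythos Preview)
Your proposal is correct and follows essentially the same route as the paper: take $\alpha = c/\sqrt{n}$, solve the Yang--Barron fixed point with the ansatz $\epsilon_n = \sqrt{\beta K/n}$, pick $\loweps_n = c_1\sqrt{K/n}$ so that the $\tfrac{K}{2}\log K$ contributions in the packing bound cancel, and finish with Pinsker. The paper makes the explicit choice $c = \sqrt{\log 2}/(2\sqrt{2})$ (which saturates $4c^2 \le \tfrac{\log 2}{2}$ and gives $\beta = \log 2$) and then solves for $c_1$, but your abstract tuning argument is equivalent.
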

The proof shows that $C = 4 \times 10^{-6}$ will work, but presumably this is very
far from tight.

\section{Conclusion}\label{sec:conclusion}

We presented nearly minimax rates for discrete distribution estimation
in KL divergence with high probability. Our results represent an
improvement in the understanding of arguably the simplest excess risk
minimization problem with unbounded losses. An intriguing direction for
future work is to extend the ideas we have presented here to more
involved but related problems with unbounded losses such as prediction
of Markov chains (see \citep{han2021optimal}) and logistic regression.
For that purpose, it is worthwhile to consider the approach of
\citet{mourtada2025estimation}, whose estimator is simpler than the OTB
estimator. Given that the approaches are technically different, both can
be useful for generalizations to more involved settings. Finally, we did
not carefully optimize our constants, so there is room for improvement
there. Considering the work of \citet{braess2004bernstein}, there is
substantial interest in understanding and obtaining the exact constant
factors on the dominant term in the minimax rate.

\acks{We thank Nikita Zhivotovskiy for bringing \citep{mourtada2025estimation} to our attention. 
Olkhovskaia and Van Erven were supported by the Netherlands Organization for Scientific
Research (NWO) under grant numbers VI.Veni.232.068 and VI.Vidi.192.095, respectively.}
\bibliography{sample.bib}

\newpage
\appendix

\section{Asymptotic Behavior of the Maximum Likelihood Estimator}
\label{app:MLEasymp}

We formalize here the claim from the introduction that $n
\KL(\p\|\pml)$ converges to a $\chi^2$ distribution as $n\to \infty$
with $\p$ held fixed.

\begin{theorem}\label{thm:MLEWeakConvergence}
  Let $\pml$ denote the maximum likelihood estimator, and let $\p$ be
  arbitrary with support of size $M = |\{ i \mid \p(i) > 0\}|$. The
  case $M=1$ is trivial, because then $\KL(\p \| \pml) = 0$ almost
  surely, so assume $M \geq 2$. Then
  \begin{equation}\label{eqn:MLEWeakConvergence}
    2 n \KL(\p \| \pml) ~\weaklyconvergesto~ \chi_{M-1}^2.
  \end{equation}
  It follows that
  \begin{equation}\label{eqn:ChiSquaredConcentration}
    \KL(\p \| \pml) \leq \frac{M-1 + \frac{3}{2}\ln(2/\delta)}{n}
    \qquad \text{with $\p$-probability at least $1-\delta$}
  \end{equation}
  for all sufficiently large $n$.
\end{theorem}

\begin{proof}
  Our approach will be to apply the second-order delta method
  \citep[Section~3.3]{VanDerVaart1998}, which combines the fact that the
  MLE is asymptotically Gaussian (by the central limit theorem) with a
  second-order Taylor expansion of the KL divergence in its second
  argument to get the desired conclusion. 

  We assume without loss of generality that $K = M$ (i.e.\ $\p$ has
  full support), because the distribution of $\KL(\p \| \pml)$ does not
  change if we remove all outcomes $i$ for which $\p(i) = 0$. Then,
  because of the simplex constraint, the MLE will be asymptotically
  Gaussian on a subspace of dimension $K-1$. It is therefore technically
  convenient to let $\mu, \bar \mu \in \reals^{K-1}$ denote the vectors
  with the first $K-1$ coordinates of $\p$ and $\pml$, respectively:
  $\mu_i = \p(i)$ and $\bar \mu_i = n_i/n$ for $i=1,\ldots,K-1$. Then
  we may view $\bar \mu$ as
  \[
    \bar \mu = \frac{1}{n} \sum_{t=1}^n e_{X_t},
  \]
  where $e_{X_t}$ is the standard basis vector in direction $X_t$ if
  $X_t \neq K$ and the all-zeros vector otherwise. Therefore, by the
  central limit theorem, the difference between $\bar \mu$ and $\mu$
  converges in distribution to a normal distribution:
  \begin{equation}\label{eqn:MLECentralLimit}
    \sqrt{n} (\bar \mu - \mu) \weaklyconvergesto \Sigma^{1/2} Z,
  \end{equation}
  where $Z \sim \normaldist(0,I)$ is standard normal in $K-1$ dimensions
  and $\Sigma \in \reals^{(K-1)\times(K-1)}$ is the covariance matrix of
  $e_X$:
  \[
    \Sigma_{ij}
    = \E[(\indicator[X=i] - \mu_i)(\indicator[X=j] - \mu_j)]
    = 
      \begin{cases}
        \mu_i (1-\mu_i) & \text{if $i = j$,}\\
        -\mu_i\mu_j & \text{otherwise}.
      \end{cases}
  \]
  By the continuous mapping theorem, it follows that
  \begin{equation}\label{eqn:QuadnormConverges}
    n \|\bar \mu - \mu\|^2 \weaklyconvergesto Z^\top \Sigma Z.
  \end{equation}
  Given the one-to-one relation between $\pml$ and $\bar \mu$, we may
  consider $\KL(\p \| \pml)$ as a function $\bar \mu$:
  \[
    \KL(\p \| \pml) = \phi(\bar \mu) :=
      \sum_{i=1}^{K-1} \mu_i \ln \frac{\mu_i}{\bar \mu_i}
      + f_K(\mu) \ln \frac{f_K(\mu)}{f_K(\bar \mu)},
  \]
  where we have abbreviated $f_K(\mu) = 1 - \sum_{i=1}^{K-1} \mu_i$.
  Since $\mu_i > 0$ for all $i$ by assumption, $\phi$ is twice
  differentiable for all $\bar \mu$ close enough to $\mu$, with
  \begin{align*}
    \nabla \phi(\bar \mu)_i
      &= \frac{f_K(\mu)}{f_K(\bar \mu)} - \frac{\mu_i}{\bar \mu_i},
      &
    \nabla^2 \phi(\bar \mu)_{ij}
      &= \begin{cases}
        \frac{f_K(\mu)}{f_K(\bar \mu)^2} + \frac{\mu_i}{(\bar \mu_i)^2} & \text{if $i = j$,}\\
        \frac{f_K(\mu)}{f_K(\bar \mu)^2} & \text{otherwise.}
      \end{cases}
  \end{align*}
  Let $H = \nabla^2 \phi(\mu)$ denote the Hessian of $\phi$ at $\bar \mu
  = \mu$, i.e.\ the Fisher information. Then, because $\phi(\mu) = 0$
  and $\nabla \phi(\mu) = 0$, a second-order Taylor expansion of $\phi$
  around $\bar \mu = \mu$ gives
  \begin{align*}
    \phi(\bar \mu)
       &= \phi(\mu) + (\bar \mu - \mu) \nabla \phi(\mu)
        + \half (\bar \mu - \mu)^\top H (\bar \mu -
        \mu)
        + o_P\big(\|\bar \mu - \mu\|^2\big)\\
       &= \half (\bar \mu - \mu)^\top H (\bar \mu -
       \mu) + o_P\big(\|\bar \mu - \mu\|^2\big)\\
    2n\KL(\p \| \pml) = 2n \phi(\bar \mu)
       &= n (\bar \mu - \mu)^\top H (\bar \mu -
       \mu) + o_P\big(n \|\bar \mu - \mu\|^2\big)
       \weaklyconvergesto Z^\top \Sigma^{1/2} H \Sigma^{1/2} Z,
  \end{align*}
  where convergence in the last step holds because both terms converge:
  first, \eqref{eqn:MLECentralLimit} and the continuous mapping theorem
  imply that
  \[
    n (\bar \mu - \mu)^\top H (\bar \mu - \mu)
    \weaklyconvergesto 
    Z^\top \Sigma^{1/2} H \Sigma^{1/2} Z.
  \]
  And, second, since $n \|\bar \mu - \mu\|^2$ converges in distribution
  by \eqref{eqn:QuadnormConverges}, it is uniformly tight by Prohorov's
  theorem \citep{VanDerVaart1998}, which implies that $o_P\big(n \|\bar
  \mu - \mu\|^2\big) = o_P(1)$. The convergence step above therefore
  follows from Slutsky's theorem, which allows us to combine the 
  convergence of the two terms.

  It remains to simplify $\Sigma^{1/2} H \Sigma^{1/2}$, using that the
  Fisher information matrix $H$ is the inverse of the covariance matrix
  $\Sigma$. Specifically, it can be confirmed that $H$ is the left
  inverse of $\Sigma$ by direct calculation:
  \begin{align*}
    (H \Sigma)_{ij}
    &= \sum_{k=1}^{K-1}
    \Big(
      \frac{1}{f_K(\mu)} + \indicator[k = i] \frac{1}{\mu_i}
    \Big)
    \big(
      \indicator[k = j] \mu_j -\mu_k\mu_j
    \big)\\
    &= \sum_{k=1}^{K-1}
    \Big(
      \indicator[k = j]\frac{\mu_j}{f_K(\mu)} 
      + \indicator[k = i = j] \frac{\mu_j}{\mu_i}
      - \frac{\mu_k\mu_j}{f_K(\mu)}
      - \indicator[k = i] \frac{\mu_k\mu_j}{\mu_i}
    \Big)\\
    &= \frac{\mu_j}{f_K(\mu)} 
      + \indicator[i = j]
      - \Big(\frac{\mu_j}{f_K(\mu)} - \mu_j\Big)
      - \mu_j
    = \indicator[i = j],
  \end{align*}
  and, since both $H$ and $\Sigma$ are symmetric, it follows that $H =
  \Sigma^{-1}$. Being the inverse of a symmetric, positive definite
  matrix, $\Sigma^{-1}$ is also positive definite, and therefore
  \[
    \Sigma^{1/2} H \Sigma^{1/2}
    = \Sigma^{1/2} \Sigma^{-1} \Sigma^{1/2}
    = \Sigma^{1/2} \Sigma^{-1/2} \Sigma^{-1/2} \Sigma^{1/2}
    = I.
  \]
  Putting everything together, we have shown that
  \[
    2 n\KL(\p \| \pml) \weaklyconvergesto Z^\top Z,
  \]
  where $Z^\top Z$ has a $\chi_{K-1}^2$ distribution. Since we have
  reduced to the case that $K=M$, this completes the proof of
  \eqref{eqn:MLEWeakConvergence}.

  To obtain \eqref{eqn:ChiSquaredConcentration}, it suffices to use
  concentration of the $\chi^2$ distribution around its mean. In
  particular, if $Y \sim \chi_{M-1}^2$, then \citet{LaurentMassart2000}
  show that
  \begin{align*}
    \Pr\Big(
    Y \geq M-1 + 2\sqrt{(M-1)\ln(1/\delta)} + 2\ln(1/\delta)
    \Big) &\leq \delta\\
    \Pr\Big( Y \geq 2(M-1) + 3\ln(1/\delta) \Big) &\leq \delta,
  \end{align*}
  where we have used that $\sqrt{ab} \leq \half a + \half b$ for
  $a,b\geq 0$. Since, for all sufficiently large $n$,
  \[
    \Big|
      \Pr\Big( 2n\KL(\p \| \pml) \geq 2(M-1) + 3\ln(2/\delta)
      \Big)
    -
      \Pr\Big( Y \geq 2(M-1) + 3\ln(2/\delta) \Big)
      \Big| \leq \frac{\delta}{2},
  \]
  we conclude that, for all such $n$,
  \[
    \Pr\Big( 2n\KL(\p \| \pml) \geq 2(M-1) + 3\ln(2/\delta) \Big)
    \leq 
    \Pr\Big( Y \geq 2(M-1) + 3\ln(2/\delta) \Big) + \frac{\delta}{2}
    \leq 
    \frac{\delta}{2} + \frac{\delta}{2}
    = \delta,
  \]
  from which \eqref{eqn:ChiSquaredConcentration} follows.
\end{proof}

\section{Auxilliary Results}

\begin{restatable}{relemma}{lemgenYangBarron}\label{lem:generalizedYangBarron}
  For any distributions $p,q,r$ such that $\frac{r(i)}{q(i)} \leq V$ for
  all $i$
  \[
    \sumK p(i) \Big(\log \frac{r(i)}{q(i)}\Big)^2 
    \leq (2+\log V) \sumK p(i) \log \frac{r(i)}{q(i)} + \sumK p(i) \frac{q(i)}{r(i)} - 1.
  \]
    
\end{restatable}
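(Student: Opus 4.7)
\textbf{Plan}
The plan is to obtain the inequality from a single pointwise bound after the substitution $Y_i = \log(r(i)/q(i))$. Because $r(i)/q(i) \leq V$ we have $Y_i \leq \log V$; moreover the constraint $\sum r(i) = \sum q(i) = 1$ forces $\max_i r(i)/q(i) \geq 1$, so I may assume $V \geq 1$. Concretely, I aim to establish the pointwise estimate
\[
    Y^2 \leq (2 + \log V)\bigl(e^{-Y} + Y - 1\bigr)
    \qquad \text{for every $Y \leq \log V$ with $V \geq 1$,}
\]
and then weight by $p(i)$ and sum over $i$; using $e^{-Y_i} = q(i)/r(i)$, this recovers the claimed inequality after collecting the $-1$ terms.

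The pointwise bound is the heart of the argument. Write $g(Y) = e^{-Y} + Y - 1$, which is $\geq 0$ by convexity of $e^{-Y}$, with equality only at $Y = 0$. For $Y \neq 0$ the bound is equivalent to $\phi(Y) := g(Y)/Y^2 \geq 1/(2+\log V)$, so I would first show that $\phi$, continuously extended by $\phi(0) = 1/2$, is non-increasing on $\reals$. I plan to write $\phi'(Y) = \psi(Y)/Y^3$ with $\psi(Y) = 2 - Y - (Y + 2) e^{-Y}$, and then argue via $\psi(0) = 0$, $\psi'(0) = 0$, and the clean second-derivative identity $\psi''(Y) = -Y e^{-Y}$. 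The latter pins the maximum of $\psi'$ at $Y = 0$ with value $0$, so $\psi' \leq 0$ throughout and $\psi$ is strictly decreasing through the origin. Hence $\psi$ and $-Y$ share sign, while $Y^3$ has the sign of $Y$, which together give $\phi'(Y) \leq 0$ on both sides of $0$.

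With monotonicity of $\phi$ in hand, for every $Y \leq \log V$ one has $\phi(Y) \geq \phi(\log V)$, so the problem reduces to the boundary check $\phi(\log V) \geq 1/(2+\log V)$. Setting $L = \log V \geq 0$ and clearing denominators, this is equivalent to $\Phi(L) := L - 2 + (L + 2) e^{-L} \geq 0$ on $[0, \infty)$. I would verify this from $\Phi(0) = 0$, $\Phi'(L) = 1 - (L+1) e^{-L}$ with $\Phi'(0) = 0$, and $\Phi''(L) = L e^{-L} \geq 0$, which together force $\Phi' \geq 0$ and hence $\Phi \geq 0$ throughout $[0, \infty)$.

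The main obstacle is the monotonicity of $\phi$: since the auxiliary function $\psi$ has a double zero at the origin, naive sign analysis of the numerator together with $Y^3$ breaks down at $Y = 0$, and the identity $\psi''(Y) = -Y e^{-Y}$ is essential for handling both sides of zero uniformly. Once $\phi$ is shown to be monotone and the boundary value is checked, exponentiating back to $u = r/q$ and summing against $p(i)$ is purely mechanical.
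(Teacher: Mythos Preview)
Your proof is correct and is essentially the paper's argument in logarithmic coordinates: the paper's function $\phi_1(u) = (\log u + 1/u - 1)/(\log u)^2$ is precisely your $\phi(Y) = g(Y)/Y^2$ under $Y = \log u$, and the two key facts---monotonicity of $\phi$ and the boundary estimate $\phi(\log V) \geq 1/(2+\log V)$---are exactly what the paper invokes, citing Yang and Barron (1998) rather than deriving them. You supply those derivations in detail via the identities $\psi''(Y) = -Ye^{-Y}$ and $\Phi''(L) = Le^{-L}$, which is fine; note that your summed pointwise bound puts the factor $(2+\log V)$ on the entire right-hand side, matching what the paper's own proof and its corollary actually use.
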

\begin{proof}
  We generalize the proof of Lemma~4 by Yang \& Barron, 1998: let
  $\phi(V) = \frac{\log V + 1/V - 1}{(\log V)^2} \geq \frac{1}{2 + \log
  V}$ be their decreasing function $\phi_1$. Then
  \begin{align*}
    \phi(V) \sumK p(i) \Big(\log \frac{r(i)}{q(i)}\Big)^2 
      &\leq \sumK p(i)\phi(\frac{r(i)}{q(i)}) \Big(\log \frac{r(i)}{q(i)}\Big)^2 \\
      &= \sumK p \Big(\log \frac{r(i)}{q(i)} + \frac{q(i)}{r(i)} - 1\Big)\\
      &= \sumK p(i) \log \frac{r(i)}{q(i)} + \sumK p(i) \frac{q(i)}{r(i)} - 1.
  \end{align*}
\end{proof}
Applying this with $p = \p$ and $r=\pp$ we obtain:
\begin{restatable}{recorollary}{corapplyYB}\label{cor:applyYB}
  For any $q$ such that $\log \frac{\ppi}{q(i)} \leq \alpha$ for all
  $i$,
  \[
  \sumK \pie \log \Big(\frac{\ppi}{q(i)}\Big)^2
  \leq (2 + \alpha) \Big(\sumK \pie\log\left(\frac{\ppi}{q(i)}\right)  + \frac{\sumK \gamma_i}{n}\Big).
  \]
\end{restatable}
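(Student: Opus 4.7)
The plan is to apply Lemma~\ref{lem:generalizedYangBarron} with $p=\p$, $r=\pp$, and the given $q$, taking $V = e^{\alpha}$; the hypothesis $\log(\ppi/q(i)) \leq \alpha$ for all $i$ guarantees $\ppi/q(i) \leq V$ pointwise, and it also forces $\alpha \geq 0$, because $1 = \sumK \ppi \leq e^{\alpha} \sumK q(i) = e^{\alpha}$. The lemma then yields directly
\[
\sumK \pie \Big(\log \frac{\ppi}{q(i)}\Big)^2
  \leq (2+\alpha) \sumK \pie \log \frac{\ppi}{q(i)} + \sumK \pie \frac{q(i)}{\ppi} - 1,
\]
so the only remaining task is to control the residual $\sumK \pie \frac{q(i)}{\ppi} - 1$.

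For this I would exploit the explicit form $\ppi = \frac{n\pie + \gamma_i}{n + \sumK \gamma_j}$. Since each $\gamma_i \geq 0$, we have $\ppi \geq \frac{n\pie}{n + \sumK \gamma_j}$, which produces the $i$-independent pointwise bound
\[
\frac{\pie}{\ppi} \;\leq\; \frac{n + \sumK \gamma_j}{n} \;=\; 1 + \frac{\sumK \gamma_j}{n}.
\]
Multiplying by $q(i)$, summing over $i$, and using $\sumK q(i) = 1$ gives $\sumK \pie \frac{q(i)}{\ppi} - 1 \leq \frac{\sumK \gamma_i}{n}$.

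Since $2 + \alpha \geq 2 \geq 1$, the residual is dominated by $(2+\alpha)\cdot \frac{\sumK \gamma_i}{n}$; combined with the main term this produces exactly $(2+\alpha)\bigl(\sumK \pie \log(\ppi/q(i)) + \frac{\sumK \gamma_i}{n}\bigr)$, which is the claim. The only conceptual (and very minor) obstacle is recognizing that the additive correction from Lemma~\ref{lem:generalizedYangBarron} is small precisely because $\pp$ is the $\gamma_i$-smoothing of $\p$: this makes the density ratio $\pie/\ppi$ uniformly bounded by $1 + \frac{\sumK \gamma_j}{n}$, with no dependence on $i$ and no use of the high-probability machinery needed elsewhere in the paper.
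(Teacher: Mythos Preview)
Your proof is correct and essentially identical to the paper's: both apply Lemma~\ref{lem:generalizedYangBarron} with $p=\p$, $r=\pp$, $V=e^{\alpha}$, then bound the residual $\sumK \pie\,q(i)/\ppi - 1 \leq \frac{\sumK \gamma_i}{n}$ via the uniform ratio $\pie/\ppi \leq 1 + \frac{\sumK \gamma_j}{n}$, and finally absorb the residual into the $(2+\alpha)$ factor. Your explicit verification that $\alpha \geq 0$ (hence $2+\alpha \geq 1$) is a nice touch that the paper leaves implicit.
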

\begin{proof}
  By Lemma~\ref{lem:generalizedYangBarron}
\begin{align*}
  \sumK \pie \log \Big(\frac{\ppi}{q(i)}\Big)^2
  &\leq (2 + \alpha) \Big(\sumK \pie\log\left(\frac{\ppi}{q(i)}\right) + \sumK \pie\frac{q(i)}{\ppi}
  - 1\Big)\\
  &= (2 + \alpha) \Big(\sumK \pie\log\left(\frac{\ppi}{q(i)}\right) + \sumK q(i) \frac{\pie}{\ppi}
   - 1\Big)\\
  &\leq (2 + \alpha) \Big(\sumK \pie\log\left(\frac{\ppi}{q(i)}\right) + \frac{n + \sumK \gamma_i}{n} -
  1\Big)\\
  &= (2 + \alpha) \Big(\sumK \pie\log\left(\frac{\ppi}{q(i)}\right) + \frac{\sumK \gamma_i}{n}\Big)\,.
\end{align*}
\end{proof}

\begin{lemma}\label{lem:multitobino}
    If $N_1, \ldots, N_K \sim M(n, \p(1), \ldots, \p(K))$ and $\tilde{N}_i \sim B(n, \min\{1, 3 \p(i)\})$, then for any $R \subset [K]$ such that $\sum_{i \not \in R} \pie \geq 1/3$, any $b_1, \ldots, b_K \geq 0$, and any $z > 0$
    \begin{align*}
        \Pp(\sum_{i \in R} \id\{N_i \geq b_i\} \geq z) \leq \Pp(\sum_{i \in R} \id\{\tilde N_i \geq b_i\} \geq z)\,.
    \end{align*}
\end{lemma}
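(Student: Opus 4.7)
The plan is to prove the stated stochastic inequality via a pointwise coupling: I will construct on one probability space the multinomial $(N_1,\ldots,N_K)\sim M(n,\p)$ together with independent variables $\tilde N_i\sim B(n,\min\{1,3\pie\})$, $i\in R$, such that $\tilde N_i\geq N_i$ almost surely. Given such a coupling, $\id\{N_i\geq b_i\}\leq\id\{\tilde N_i\geq b_i\}$ pointwise for every $i$, so the sums over $i\in R$ are ordered everywhere and the claimed inequality of tails is immediate.

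The coupling is built trial by trial. For each $t\in[n]$, first draw $Z_{t,i}\sim\textnormal{Bernoulli}(\min\{1,3\pie\})$ independently across all $(t,i)$ with $i\in R$, and set $\tilde N_i:=\sum_t Z_{t,i}$; by construction these are independent binomials with the prescribed marginals. It then suffices to define $X_t$ as a function of $(Z_{t,i})_{i\in R}$ and an auxiliary independent uniform such that $X_t\sim\p$ marginally and $X_t=i\in R\Rightarrow Z_{t,i}=1$; the implication forces $N_i:=\sum_t\id[X_t=i]\leq\tilde N_i$, and i.i.d.\ sampling across $t$ recovers $(N_i)\sim M(n,\p)$.

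Existence of such an $X_t$ is a transportation problem on $[K]\times 2^R$: I need nonnegative masses $\mu(j,S)$, supported on $\{(j,S):j\in R^c\text{ or }j\in S\}$, with row marginal $\pie$ and column marginal $\prod_{i\in S}\min\{1,3\pie\}\prod_{i\in R\setminus S}(1-\min\{1,3\pie\})$. By Hall's matching criterion, feasibility reduces, after taking the worst column set to be $\{S:S\subseteq A\}$ and writing $R'=R\setminus A$, to showing that for every $R'\subseteq R$,
\[
  \prod_{i\in R'}\bigl(1-\min\{1,3\pie\}\bigr)\leq 1-\sum_{i\in R'}\pie.
\]
I expect this family of inequalities to be the main obstacle, and it is exactly where the hypothesis is used: if some $\pie\geq 1/3$ in $R'$ the left-hand side is zero and the inequality is trivial; otherwise it reads $1-\prod_{i\in R'}(1-3\pie)\geq\sum_{i\in R'}\pie$, i.e.\ the union probability of independent $\textnormal{Bernoulli}(3\pie)$ events is at least $\sum\pie$. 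Under $\sum_{i\in R}\pie\leq 2/3$ (which is equivalent to the hypothesis) we have $s:=\sum_{i\in R'}\pie\leq 2/3$, and AM--GM gives $\prod(1-3\pie)\leq(1-3s/|R'|)^{|R'|}$; a short one-variable calculus check shows $(1-3s/m)^m\leq 1-s$ on $s\in[0,2/3]$ for every $m\geq 1$, which closes the argument.
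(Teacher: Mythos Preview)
Your argument is correct and in fact yields a strictly stronger conclusion than the lemma states: you produce a coupling with $N_i\leq\tilde N_i$ almost surely for every $i\in R$ simultaneously, which implies the desired tail inequality for any monotone functional of $(N_i)_{i\in R}$, not just $\sum_{i\in R}\id\{N_i\geq b_i\}$. The reduction to Hall's condition is right (the worst column set for a given neighbourhood $R^c\cup A$ is $\{S:S\subseteq A\}$, giving column mass $\prod_{i\in R'}(1-q_i)$ with $R'=R\setminus A$), and the one-variable inequality $(1-3s/m)^m\leq 1-s$ on $s\in[0,2/3]$ checks out: for $m=1$ it reads $2s\geq 0$; for $m\geq 2$ the function $h(s)=1-s-(1-3s/m)^m$ satisfies $h(0)=0$, $h'(0)=2>0$, has a single interior critical point, and $h(2/3)=1/3-(1-2/m)^m>1/3-e^{-2}>0$.

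The paper's proof is genuinely different. It does not build a coupling; instead it uses the conditional law
\[
  N_{i'}\mid (N_j)_{j\in R\setminus\{i'\}}
  \;\sim\; B\!\Big(n-\textstyle\sum_{j} N_j,\ \dfrac{\p(i')}{\sum_{j\notin R\setminus\{i'\}}\p(j)}\Big),
\]
together with $\sum_{j\notin R}\p(j)\geq 1/3$ to get $\Pp(N_{i'}\geq b\mid(N_j)_{j\neq i'})\leq\Pp(\tilde N_{i'}\geq b)$, and then replaces the $N_i$ by independent $\tilde N_i$ one at a time inside the event $\{\sum_i\id\{N_i\geq b_i\}\geq z\}$. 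Their route is shorter and uses only a textbook multinomial fact, but it is tailored to this specific event and needs the inductive bookkeeping. Your coupling is more structural, reusable for any monotone statistic, and makes the role of the hypothesis $\sum_{i\notin R}\pie\geq 1/3$ transparent as exactly the obstruction in Hall's condition.
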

\begin{proof}
For simplicity we only show the proof for $R = [K-1]$ and assume $p(K) \geq \frac13$. The general proof follows from the proof for this case. For any $i'$ and $R' = [K-1]\setminus i'$ a basic property of the multinomial distribution is that (see, e.g. \citep[Chapter~4]{roussas2003introduction}) 
\begin{align*}
    N_{i'}|\{N_i : i \in R'\} \sim B(n - \sum_{i \in R'}N_i, \frac{p(i')}{\sum_{i \not \in R'} p(j)}) \,.
\end{align*}
Thus, since $\sum_{i \not \in R'} p(j) > p(K) \geq \frac{1}{3}$, we have that for any $b \geq 0$
\begin{align}\label{eq:mtob}
    \Pp(N_1 \geq b|N_2, \ldots, N_{K-1}) \leq \Pp(\tilde{N}_1 \geq b)\,.
\end{align}
Denote by $Y_i = \id\{N_i \geq b_i\}$, by $\tilde{Y}= \id\{\tilde{N_i} \geq b_i\}$, and by
\begin{align*}
    F = \{x_2,\ldots, x_{K-1}: \sum_{i=2}^{K-1} \id\{x_i > b_i\} = z - 1 \}\,.
\end{align*}
We have 
\begin{align*}
    & \Pp(\sum_{i=1}^{K-1} Y_i \geq z) \\
    & = \Pp(Y_1 \geq z - \sum_{i = 2}^{K-1} Y_i) \\
    & = \sum_{j=1}^{K-2} \Pp (\sum_{i = 2}^{K-1} Y_i = j) \Pp(Y_1 \geq z - j|\sum_{i = 2}^{K-1} Y_i = j) \\
    & \overset{a}{=} \Pp (\sum_{i = 2}^{K-1} Y_i > z-1) + \Pp (\sum_{i = 2}^{K-1} Y_i = z-1) \Pp(Y_1 = 1|\sum_{i = 2}^{K-1} Y_i = z-1) \\
    & = \Pp (\sum_{i = 2}^{K-1} Y_i > z-1) + \Pp(Y_1 = 1 \bigcap \left(\sum_{i = 2}^{K-1} Y_i = z-1\right))\\
    & = \Pp(\sum_{i = 2}^{K-1} Y_i > z-1) + \sum_{(x_2, \ldots, x_{K-1}) \in F} \Pp(Y_1 = 1 \bigcap (N_2 = x_2, \ldots, N_{K-1} = x_{K-1})) \\
    & = \Pp(\sum_{i = 2}^K Y_i > z-1) + \sum_{(x_2, \ldots, x_{K-1}) \in F} \Pp(N_1 \geq b_1 \bigcap (N_2 = x_2, \ldots, N_{K-1} = x_{K-1})) \\
    & \leq  \Pp(\sum_{i = 2}^K Y_i > z-1) + \sum_{(x_2, \ldots, x_{K-1}) \in F} \Pp(\tilde N_1 \geq b_1 \bigcap (N_2 = x_2, \ldots, N_{K-1} = x_{K-1}))  \\
    & = \Pp(\tilde{Y}_1 + \sum_{i = 2}^{K-1} Y_i \geq z)\,,
\end{align*}
where $a$ follows from the fact that $Y_1$ can only take values $0$ or $1$ and the inequality is due to \eqref{eq:mtob}.
Repeating the above argument shows that $\Pp(\sum_{i=1}^{K-1} Y_i \geq z) \leq \Pp(\sum_{i=1}^{K-1} \tilde Y_i \geq z)$.
\end{proof}

\section{Additional Proofs for Section~\ref{sec:upperbounds}}\label{app:upperbounds}

\lemratioOTB*

\begin{proof}
    We start with
    \begin{align*}
        \frac{\pie}{\ppi} = \frac{n + \sumK \gamma_i}{n} \frac{n\pie}{n \pie + \gamma_i} \leq 1 + \frac{\sumK \gamma_i}{n}\,.
    \end{align*}
    By the empirical Bernstein inequality \citep[Theorem 4]{maurer2009empirical} and a union bound we have that for all $i \in [K]$, with probability at least $1 - 2 K \delta'$ 
    \begin{align}\label{eq:empbern}
        |\nin - \pie n| & \leq \sqrt{2\nin \ln(2/\delta)} + 5\ln(2/\delta') \notag \\
        |\ninh - \pie n| & \leq \sqrt{2\ninh \ln(2/\delta)} + 5\ln(2/\delta')\,.
    \end{align}
    On this event, by the AM-GM inequality, for any fixed $\eta > 0$, we have that
    \begin{align}\label{eq:empbern2}
        (1-\eta) \nin - \left(\frac{2}{\eta} + 5\right) \ln(2/\delta') \leq  2 n \pie \leq (1+\eta) \nin + \left(\frac{2}{\eta} + 5\right) \ln(2/\delta') \notag\\
        (1-\eta) \ninh - \left(\frac{2}{\eta} + 5\right) \ln(2/\delta') \leq  2 n \pie \leq (1+\eta) \ninh + \left(\frac{2}{\eta} + 5\right) \ln(2/\delta')\,.
    \end{align}
    On the event that \eqref{eq:empbern} holds, we thus have that
    \begin{align*}
        \frac{\pie}{\pti} & \leq \frac{n + \sumK \gamma_i}{n} \frac{n \pie}{\ninh + \gamma_i} \\
        & \leq \frac{n + \sumK \gamma_i}{n} \frac{\half (1+\eta) \ninh + \left(\frac{2}{\eta} + 5\right) \ln(2/\delta')}{\ninh + \gamma_i} \\
        & \leq \left(1 + \frac{\sumK \gamma_i}{n} \right)\left(\frac{1+\eta}{2} + \frac{\left(\frac{2}{\eta} + 5\right) \ln(2/\delta')}{\max\{\ninh, \gamma_i\}}\right)\,.
    \end{align*}
    Likewise, on the event that \eqref{eq:empbern} holds, we have that
    \begin{align*}
        \frac{\ppi}{\pti} & \leq \frac{n \pie + \gamma_i}{\ninh + \gamma_i} \\
        & \leq 1 + \frac{n\pie}{\ninh + \gamma_i} \\
        & \leq 1 + \left(1 + \frac{\sumK \gamma_i}{n} \right)\left(\frac{1+\eta}{2} + \frac{\left(\frac{2}{\eta} + 5\right) \ln(2/\delta')}{\max\{\ninh, \gamma_i\}}\right)\,.
    \end{align*}
    And once more, on the event that \eqref{eq:empbern} holds, we have that
    \begin{align*}
        \frac{\pti}{\ppi} & = \frac{n + \sumK \gamma_i}{t-1 + \sumK \gamma_i} \frac{\nitm + \gamma_i}{n \pie + \gamma_i} \\
        & \leq 3\frac{\nin + \gamma_i}{n \pie + \gamma_i} \\
        & \leq 3 \left(1 + \frac{\nin}{\max\{\gamma_i, (1-\eta) \ninh - \left(\frac{2}{\eta} + 5\right) \ln(2/\delta')\}}\right)
    \end{align*}
    Furthermore, we have
    \begin{align}\label{eq:nintoninh}
        \nin & \leq \frac{1}{1-\eta} \left(2n \pie + \left(\frac{2}{\eta} + 5\right) \ln(2/\delta') \right) \notag \\
        & \leq \frac{1}{1-\eta} \left(2(1+\eta) \ninh + 3\left(\frac{2}{\eta} + 5\right) \ln(2/\delta')\right) \,,
    \end{align}
    and thus
    \begin{align*}
        \frac{\pti}{\ppi} & \leq 3 \left(1 + \frac{\frac{1}{1-\eta} \left(2(1+\eta) \ninh + 3\left(\frac{2}{\eta} + 5\right) \ln(2/\delta')\right)}{\max\{\gamma_i, (1-\eta) \ninh - \left(\frac{2}{\eta} + 5\right) \ln(2/\delta')\}}\right) \,.
    \end{align*}
    Finally, by equation~\ref{eq:nintoninh} we also have 
    \begin{align*}
        \frac{\nin + \gamma_i}{\ninh + \gamma_i} & \leq \frac{2(1+\eta)}{1-\eta} + \frac{\left(\frac{2}{\eta} + 5\right) \ln(2/\delta')}{(1- \eta)(\ninh + \gamma_i)}
    \end{align*}
    Setting $\delta' = \frac{\delta}{2K}$ and $\eta = \half$ completes the proof. 
\end{proof}

\lemdivergenceequivalence*
\begin{proof}
Let $R$ be a self-concordant function, let $r = \sqrt{(x - y)(R''(x))(x - y)}$, and let $\rho(u) = -\log(1-u) - u$.
By \citep[2.4]{nemirovski96},
\begin{align*}
R(y) \geq R(x) + \langle\nabla R(x), y - x\rangle + \rho(-r) 
\geq R(x) + \langle \nabla R(x), y - x\rangle + \frac{\min(r, r^2)}{4}\,.
\end{align*}
Since $-\log(x)$ is self-concordant, we have
\begin{align*}
    \pie (\log(p(i)) - \log(q(i))) & \geq \pie \Big(\frac{-1}{p(i)}(q(i) - p(i)) + \frac{1}{4}\min \{r, r^2\}\Big)\,,
\end{align*}
where 
\begin{align*}
    r^2 & = \frac{(\pie - \bar{p}_{n}(i))^2}{\pie^2} \leq \frac{1}{\pie^2}\left(\frac{7}{32}\bar{p}_n(i)\right)^2 \leq \frac{1}{\pie^2}\left(\frac{14}{32}\pie\right)^2 \leq \frac{1}{4}\,.
\end{align*}
So, we have
\begin{align*}
    \KL(p \| q) & = \sumK p(i) (\log(p(i)) - \log(q(i))) \\
    & \geq \sumK p(i) \Big(\frac{-1}{p(i)}(q(i) - p(i)) + \frac{1}{4}\min \{r, r^2\}\Big) \\
    & = \sumK \Big((q(i) - p(i)) + \frac{1}{4}\frac{(p(i) - q(i))^2}{p(i)}\Big) \\
    & = \frac{1}{4}\sumK\frac{(p(i) - q(i))^2}{p(i)} \\
    & = \frac{1}{4} \chi^2 (q,p)\\
    & \geq \frac{1}{6}\sumK\frac{(p(i) - q(i))^2}{q(i)} \\
    & = \frac{1}{6} \chi^2 (p,q)\,.
\end{align*}
By Lemma~4 in \citep{yang1998asymptotic} we have that 
\begin{align*}
    & \KL(p \| q) \leq (2 + \log(3/2)) H^2(p , q) \leq (2 + \log(3/2)) \KL(q\|p) \\
    & \leq (2 + \log(3/2)) \chi^2(q,p) \leq (4 + 2\log(3/2)) \chi^2(p,q) \,,
\end{align*}
where $H^2(p , q) \leq \KL(q\|p) \leq \chi^2(q,p)$ can be found in \citep{gibbs2002choosing}.
Na\"ively bounding $\log(3/2) < 1/2$ and simplifying completes the proof. 
\end{proof}

\thdivergences*
\begin{proof}
By Bennet's inequality \citep[Theorem 4]{maurer2009empirical} and a union bound we have that for all $i \in [K]$, with probability at least $1 - \delta$ 
\begin{align*}
    |\nin - \pie n| \leq \sqrt{2n\pie \ln(4K/\delta)} + 5\ln(4K/\delta) \leq \frac{7}{32}n \pie \,,
\end{align*}
where the last inequality follows from the assumption that $|\Jtcal| = 0$. 
Thus, on this event, for all $i \in [K]$
\begin{align*}
    \bar{p}_n(i) \in \left[\frac{25}{32} \pie, \frac{39}{32} \pie)\right] \subset \left[\frac{1}{2} \pie, \frac{3}{2} \pie\right]\,.
\end{align*}
At this point, an application of Lemma~\ref{lem:divergenceequivalence} allows us to complete the proof of the second part of the statement. For $\Pp\left(\KL(\p \| \bar{p}_{n}) \geq C \frac{K + \log(1/\delta)}{n}\right) \leq \delta$ we can apply Lemma~\ref{lem:divergenceequivalence} to obtain $\KL(\p \| \bar{p}_{n}) \leq \KL(\bar{p}_{n}\|\p )$, after which we can apply Lemma~\ref{lem:kl_simple_mean} to complete the proof.

\end{proof}

\begin{lemma}\label{lem:helltohell}
    We have that
    \begin{align*}
        H^2(\p,p_{n+1}) - H^2(\p,\bar p_n) \leq \frac{\sumK
        \gamma_i}{2n}.
    \end{align*}
\end{lemma}
\begin{proof}
    By definition of the Hellinger distance we have that
\begin{align*}
    H^2(\p,p_{n+1}) - H^2(\p,\bar p_n) & = \frac{1}{2}\sumK \sqrt{\pie}\big(\sqrt{\bar p_n(i)} - \sqrt{{p}_{n+1}(i)}\big) \\
    & = \frac{1}{2}\sumK \sqrt{\pie}\Big(\sqrt{\bar p_n(i)} - \sqrt{\bar{p}_n(i) \frac{n}{n+\sumK \gamma_i} + \frac{\gamma_i}{n+\sumK \gamma_i}}\Big) \\
    & \leq \frac{1}{2}\sumK \sqrt{\pie\bar{p}_n(i)}\Big(1 - \sqrt{\frac{n}{n+\sumK \gamma_i}}\Big) \\
    & = \frac{1}{2}\sumK \sqrt{\pie\bar{p}_n(i)}\frac{n+\sumK \gamma_i - \sqrt{n+\sumK \gamma_i}\sqrt{n}}{n+\sumK \gamma_i} \\
    & \leq \frac{\sumK \gamma_i}{2(n+\sumK \gamma_i)} \sumK \sqrt{\pie\bar{p}_n(i)}  \\
    & = \frac{\sumK \gamma_i}{2(n+\sumK \gamma_i)} (1 - H^2(p,\bar{p}_n(i))) \\
    & \leq \frac{\sumK \gamma_i}{2n}\,.
\end{align*}
\end{proof}

\begin{restatable}{relemma}{lemKLtoReg}\label{lem:KLtoReg}
    On event $\mathcal{Z}$, with probability at least $1-\delta$ we have that
    \begin{align*}
        \frac{n}{2} \KL(\p\|\potb) \leq 2\regret_T + 2 \sumK \gamma_i + (4 + 2 \log(\beta))\log(1/\delta)\,,
    \end{align*}
    where $\regret_T = \sumnhp \left( -\ln{p_t(X_t)} - (- \ln{\pp(X_t)} )\right)$.
\end{restatable}
\begin{proof}
We start with an application of Jensen's inequality 
\begin{align*}\label{eq:KL-lowerbound_use}
	\KL(\p\|\potb) & = \sumK \pie \log\left(\frac{\pie}{\potbi}\right) \\
    & = \sumK \pie \bigg(\log\left(\frac{\ppi}{\potbi}\right) + \log\left(\frac{\pie}{\ppi}\right)\bigg) \\
    & \leq \sumK \pie \bigg(\log\left(\frac{\ppi}{\potbi}\right) + \log\left(1 + \frac{\sumK \gamma_i}{n}\right)\bigg) \tag{\textnormal{By \eqref{eq:eventOTB}}}\\
    & \leq \sumK \pie\log\left(\frac{\ppi}{\potbi}\right) + \frac{\sumK \gamma_i}{n} \tag{\textnormal{$\log(1 + x) \leq x$ for $x > -1$}}\\
    & \leq \frac{2}{n} \sumnhp \sumK \pie\log\left(\frac{\ppi}{\pti}\right) + \frac{\sumK \gamma_i}{n} \tag{Jensen's inequality}
\end{align*}
We need the following concentration inequality for martingales whose proof can be found in \cite[Theorem 1]{beygelzimer2011contextual}.
\begin{lemma}[A version of Freedman's inequality]\label{lem:bernie}
	Let $X_1, \ldots, X_T$ be a martingale
	difference sequence adapted to a filtration $(\mathcal{F}_i)_{i \le T}$. That is, in particular, $\E_{t-1}[X_t] = 0$.
	Suppose that $|X_t| \leq R$ almost surely. Then for any $\delta \in (0,1), \lambda \in [0, 1/R]$, with probability at least $1-\delta$, it holds that
	\begin{equation*}
		\label{eq:firstfreedmanineq} 
		\sumT X_t \leq \lambda(e-2)\sumT \E_{t-1}[X_t^2] + \frac{\ln(1/\delta)}{\lambda}~.
	\end{equation*}
\end{lemma}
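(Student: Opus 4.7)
The plan is to prove this as a standard exponential-martingale / Chernoff-style argument, which is the usual route to Freedman-type bounds. The backbone will be a supermartingale built from the MGF of the increments, dominated via a local quadratic upper bound on $e^y$ on $|y|\le 1$, followed by Markov's inequality.

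The starting point is the elementary inequality $e^y \le 1 + y + (e-2) y^2$ for all $y \in [-1,1]$. To see this, write $e^y - 1 - y = \sum_{k\ge 2} y^k/k!$ and factor out $y^2$ to bound the remaining series by $\sum_{k\ge 2} 1/k! = e-2$ when $|y|\le 1$. Apply this with $y = \lambda X_t$, which is admissible because $|\lambda X_t| \le \lambda R \le 1$ by assumption. Taking conditional expectations and using $\E_{t-1}[X_t]=0$ gives
\begin{align*}
\E_{t-1}\bigl[e^{\lambda X_t}\bigr]
\le 1 + (e-2)\lambda^2 \E_{t-1}[X_t^2]
\le \exp\bigl((e-2)\lambda^2 \E_{t-1}[X_t^2]\bigr),
\end{align*}
where the last step is $1+x \le e^x$.

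Next, define the process
\begin{align*}
M_t = \exp\Bigl(\lambda \sum_{s=1}^t X_s - (e-2)\lambda^2 \sum_{s=1}^t \E_{s-1}[X_s^2]\Bigr),
\end{align*}
with $M_0 = 1$. The display above shows $\E_{t-1}[M_t] \le M_{t-1}$, so $(M_t)$ is a nonnegative supermartingale with $\E[M_T] \le 1$. By Markov's inequality,
\begin{align*}
\Pp\bigl(M_T \ge 1/\delta\bigr) \le \delta \, \E[M_T] \le \delta.
\end{align*}
Unpacking the event $M_T < 1/\delta$ and taking logarithms gives, with probability at least $1-\delta$,
\begin{align*}
\lambda \sumT X_t \le (e-2)\lambda^2 \sumT \E_{t-1}[X_t^2] + \log(1/\delta),
\end{align*}
and dividing by $\lambda > 0$ yields the claim. (The case $\lambda = 0$ is trivial.)

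There is essentially no obstacle: the only mildly delicate point is verifying the constant $e-2$ in the quadratic bound on $e^y$, which is tight as $|y|\to 0$ and suffices for the whole range $|y|\le 1$ by the series argument above. The constraint $\lambda \in [0,1/R]$ is exactly what is needed to apply this bound to every increment, and the supermartingale / Markov step is standard. One can equivalently cite Theorem~1 of \cite{beygelzimer2011contextual} at this point, as the paper already does, but the self-contained argument above fits in a few lines.
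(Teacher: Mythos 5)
Your proof is correct: the quadratic bound $e^y \le 1+y+(e-2)y^2$ on $|y|\le 1$, the exponential supermartingale, and Markov's inequality are exactly the standard Freedman-style argument, and the constraint $\lambda \in [0,1/R]$ is used precisely where needed. The paper itself does not prove this lemma but cites Theorem~1 of \cite{beygelzimer2011contextual}, whose proof is the same argument, so your self-contained derivation matches the intended route.
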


By equation~\ref{eq:eventOTB}, we know that
\begin{align}\label{eq:boundedevent}
    |\log\left(\frac{\ppi}{\pti}\right)| \leq \log\left(\left(1 + \frac{\sumK \gamma_i}{n} \right)3 \left(1 + \frac{2 \left(3 \ninh + 27\ln(4K/\delta)\right)}{\max\{\gamma_i, \half \ninh - 9 \ln(4K/\delta)\}}\right)\right) = \log(\beta_i)
\end{align}
Denote $\lambda^+ = \log(\max_i\beta_i)$. By Lemma~\ref{lem:bernie}, with probability at least $1 - \delta$, for any $\lambda \in [0, \frac{1}{\lambda^+}]$
\begin{align*}
	& \sumnhp \Big(\sumK \pie\log (\frac{\ppi}{\pti}) - \log(\frac{\pp(X_t)}{\pt(X_t)})\Big) \\
    & \leq \lambda \sumnhp \sumK \pie(\log\left(\frac{\ppi}{\pti}\right)^2 + \frac{\log(1/\delta)}{\lambda} \tag{$\E[(X - \E[X])^2] \leq \E[X^2]$}\, ,
\end{align*}
where we used that $\E[(X - \E[X])^2] \leq \E[X^2]$. 
On event \eqref{eq:boundedevent}, by Corollary~\ref{cor:applyYB} we have that 
\begin{align*}
    & \lambda \sumnhp \sumK \pie(\log\left(\frac{\ppi}{\pti}\right)^2 \\
    & \leq \lambda (2 + \log(\beta)) \sumnhp \sumK \pie\log\left(\frac{\ppi}{\pti}\right) + \frac{\lambda}{2} (2 + \log(\beta)) \sumK \gamma_i\,.
\end{align*}
Setting $\lambda = \frac{1}{4 + 2 \log(\beta)}$ we can thus conclude that with probability at least $1 - \delta$
\begin{align*}
    & \sumnhp \sumK \pie\log\left(\frac{\ppi}{\pti}\right) \\
    & \leq 2\sumnhp \sumK \left( -\ln{p_t(X_t)} - (- \ln{\pp(X_t)} )\right) \\
    & \quad + \sumK \gamma_i + (4 + 2 \log(\beta))\log(1/\delta) \,,
\end{align*}
and consequently 
\begin{align*}
    & \KL(\p\|\potb) \\
    & \leq \frac{2}{n}\Big(2\sumnhp \sumK \left( -\ln{p_t(X_t)} - (- \ln{\pp(X_t)} )\right) \\
    & \quad + 2 \sumK \gamma_i + (4 + 2 \log(\beta))\log(1/\delta)\Big)\,.
\end{align*}

\end{proof}

\begin{restatable}{relemma}{lemregretbound}\label{lem:regretbound}
    We have that
    \begin{align*}
    \regret_T & \leq \sumK \gamma_i \log(\pnhpi/\ppi) +  2\sumK \gamma_i + \frac{n}{2} \KL(\bar{p}_{n/2+1}\|\p)
	 + \sumK \log\left(\frac{\nin+ \gamma_i}{\ninh+\gamma_i}\right)\,.
    \end{align*}
\end{restatable}
\begin{proof}
Recall that
\begin{align*}
    \mathcal{R}_T = \sumnhp \left( -\ln{p_t(X_t)} - (- \ln{\pp(X_t)} )
    )\right).
\end{align*}
We can write the computation of $\ptpi$ as the action of an FTRL algorithm:
\begin{align*}
    \ptpi = \argmin_{p \in \Delta^K} \sum_{t' = n/2 + 1}^t \sumK \left(-\id[X_{t'} = i]\ln(p(i))\right) + R(p)\,,
\end{align*}
where $R(p)$ is the regularizer and is defined as 
\begin{align*}
    R(p) = \sumK \left(- \gamma_i \ln(p(i))  + \sum_{t=1}^{n/2} - \id[X_{t} = i]\ln(p(i))\right)\,.
\end{align*}
Denote by $\phi_t(p) = \sum_{t'=n/2+1}^{t} \sumK - \id[X_{t'}=i]\ln(p(i))+ R(p)$ the FTRL potential. 
By Lemma~7.1 in \citep{orabona2019modern}, we have that
\begin{align*}
    &\mathcal{R}_T = \phi_T(p_{n+1}) -  R(p_{n/2+1}) \\
    & \quad - \sumnhp \sumK  (-\ln (\pp(X_t))) + \sumnhp (\phi_t(p_t) - \phi_t(p_{t+1})) \\
    & = \phi_T(p_{n+1}) - \phi_T(\pp) -  R(p_{n/2+1}) + R(\pp) \\
    & \quad + \sumnhp (\phi_t(p_t) - \phi_t(p_{t+1}))\,.
\end{align*}
Now, since $\phi_T(p_{n+1}) - \phi_T(\pp)\leq 0$ and $\phi_{t-1}(\pt) - \phi_{t-1}(\ptp)\leq 0$ we have that 
\begin{align*}
    &\mathcal{R}_T \leq R(\pp) - R(p_{n/2+1}) + \sum_{t = n/2+1}^{n} \ln \left(\frac{p_{t+1}(X_t)}{p_t(X_t)}\right)\\
    & = \sumK \gamma_i \log(\pnhpi/\ppi) +  \sum_{t = 1}^{n/2} \log(\pnhp(X_t)/\pp(X_t)) \notag \\
	& \quad + \sum_{t = n/2+1}^{n} \sumK \id[X_t = i] \log(\frac{(n_{t,i}+ \gamma_i)(t - 1 + \sumK \gamma_i)}{(n_{t-1,i}+\gamma_i)(t + \sumK \gamma_i)}) \notag\\
    & \leq \sumK \gamma_i \log(\pnhpi/\ppi) +  \sum_{t = 1}^{n/2} \log(\pnhp(X_t)/\pp(X_t)) \notag \\
	& \quad + \sum_{t = n/2+1}^{n} \sumK \id[X_t = i] \log(\frac{n_{t,i}+ \gamma_i}{n_{t-1,i}+\gamma_i}) \tag{$\frac{t - 1 + \sumK \gamma_i}{t + \sumK \gamma_i} \leq 1$}\\
    & = \sumK \gamma_i \log(\pnhpi/\ppi) +  \sum_{t = 1}^{n/2}
    \log(\pnhp(X_t)/\pp(X_t)) 
	 + \sumK \log\left(\frac{\nin+
         \gamma_i}{\ninh+\gamma_i}\right)\,,\notag
\end{align*}
where the last equality follows from a telescoping sum.
At this point, we can use that $\sum_{t=1}^{n/2}\id[X_t=i] = \frac{n}{2} \bar{p}_{n/2+1}(i)$ to see that  
\begin{align}
    & \sum_{t=1}^{n/2} \log(\pnhp(X_t)/\pp(X_t)) \notag \\
    & =  \sumK \sum_{t=1}^{n/2} \id[X_t=i] \pa{\log\pa{\frac{\pnhpi}{\bar{p}_{n/2}(i)}}+\log\pa{\frac{\bar{p}_{n/2}(i)}{\pie}} +\log\pa{\frac{\pie}{\ppi}}}\notag\\
    & =  \sumK \sum_{t=1}^{n/2} \id[X_t=i] \pa{\log\pa{\frac{\pnhpi}{\bar{p}_{n/2}(i)}} +\log\pa{\frac{\pie}{\ppi}}} +  \frac{n}{2} \KL(\bar{p}_{n/2+1}\|\p)\notag\\
    & \leq  \sumK \sum_{t=1}^{n/2} \id[X_t=i] \pa{\log\pa{\frac{\pnhpi}{\bar{p}_{n/2}(i)}} +\frac{\gamma_i}{n}} +  \frac{n}{2} \KL(\bar{p}_{n/2+1}\|\p) \tag{$\frac{\pie}{\ppi} \leq 1 + \frac{\sumK \gamma_i}{n}$ and $\log(1+x)\le x$ for $x>0$}\\
    & = \sumK \sum_{t=1}^{n/2} \id[X_t=i] \pa{ \log\pa{\frac{(\ninh + \gamma_i)n/2}{\ninh (n/2 + \sumK \gamma_i)}}+\frac{\sumK \gamma_i}{n}} +  \frac{n}{2} \KL(\bar{p}_{n/2+1}\|\p) \tag{By definition}\\
    & \leq \sumK \sum_{t=1}^{n/2} \id[X_t=i] \pa{\frac{\sumK \gamma_i}{\ninh} +\frac{\sumK \gamma_i}{n}} +  \frac{n}{2} \KL(\bar{p}_{n/2+1}\|\p)\tag{Since $\frac{n/2}{n/2 + \sumK \gamma_i} \leq 1$}\\
    & \leq 2\sumK \gamma_i + \frac{n}{2} \KL(\bar{p}_{n/2+1}\|\p)\,, \notag 
\end{align}
which combined with the above completes the proof. 
\end{proof}

\begin{restatable}{relemma}{lemKLsimplemean}\label{lem:kl_simple_mean}
    With probability at least $1 - \delta$
    \begin{align*}
        \KL(\bar{p}_{n+1}\|\p) \leq \E[\KL(\bar{p}_{n+1}\|\p)] +
        \frac{6K + 6\ln(1/\delta)}{n} \leq \frac{7K +
        6\ln(1/\delta)}{n}.
    \end{align*}
\end{restatable}
\begin{proof}
    The first inequality can be found as Corollary~1.7 in
    \citep{agrawal2022finite}. The second inequality uses that
    $\E[\KL(\bar{p}_{n+1}\|\p)] \leq \frac{K-1}{n}$
    \citep[Section~4]{paninski2003estimation}. 
\end{proof}

\begin{restatable}{relemma}{lemcaseJone}\label{lem:caseJ1}
    Suppose that $\frac{\log(K/\delta)}{J} > 1$. Then on event $\mathcal{Z}$ we have that
    \begin{align*}
        & \sumK \gamma_i \big(\log(\pnhpi/\ppi) + 3\big) + \sumK \log\left(\frac{\nin+ \gamma_i}{\ninh+\gamma_i}\right) + (2 + \ln(\beta))\ln(1/\delta)\\
        &\leq 7\log(K/\delta) +  20 K +
        2\log(1/\delta)\log\left(400J\right).
    \end{align*}
\end{restatable}
\begin{proof}
    On event $\mathcal{Z}$ we have that
    \begin{align*}
        & \sumK \gamma_i \log(\pnhpi/\ppi) + \sumK \log\left(\frac{\nin+ \gamma_i}{\ninh+\gamma_i}\right) \\
        & \leq \sumK \gamma_i \log\left(3 + \frac{6 \left(3 \ninh + 27\ln(4K/\delta)\right)}{\max\{\gamma_i, \half \ninh - 9 \ln(4K/\delta)\}}\right) + \sumK \log\left(6 + \frac{18 \log(K/\delta)}{\ninh+\gamma_i}\right)\,. \\
        & \leq 2 K \log(100) + \log(K/\delta) + J\log(J) \\
        & \leq 2 K \log(100) + \log(K/\delta) + \log(K/\delta)\log\left(J\right) \\
        & \leq 2 K \log(100) + \log(1/\delta) + \log(1/\delta)\log\left( J\right) + 2\log(K)^2 \\
        & \leq 20 K + \log(1/\delta) + \log(1/\delta)\log\left(400J\right)
    \end{align*}
    where we used that $J < \log(K/\delta)$ and $\log(K)^2 \leq K$. By definition of $\gamma_i$ we can also bound
    \begin{align*}
        \sumK \gamma_i = \log(K/\delta)\,.
    \end{align*}
    Finally, since $J < \log(K/\delta)$ we can also see that $\beta \leq 400 J$. 
\end{proof}

\begin{restatable}{relemma}{lemcaseJtwo}\label{lem:caseJ2}
     Suppose that $\frac{\log(K/\delta)}{J} \leq 1$. Then on event $\mathcal{Z}$ we have that with probability at least $1 - 3\delta$
    \begin{align*}
        & \sumK \gamma_i \big(\log(\pnhpi/\ppi) + 3\big) + \sumK \log\left(\frac{\nin+ \gamma_i}{\ninh+\gamma_i}\right) + (2 + \ln(\beta))\ln(1/\delta)\\
        &\leq 50\log(1/\delta) \log(800J) + 2\Jt\log\left(24\left(\log\left(\frac{\Jt}{\log(1/\delta)}\right) \vee 1\right)\right) + 10 K \log(100)\,.
    \end{align*}
\end{restatable}
\begin{proof}
Since $\frac{\log(K/\delta)}{J} \leq 1$ and we have $\gamma_i = 1$ or $\gamma_i = 0$ and by definition $\sumK \gamma_i = J$. Furthermore, we can see that
\begin{align*}
    \beta & \leq 800\log(K/\delta) \leq 800J\,,
\end{align*}
Recall that $\Jtcal = \{i: n\pie < 32\log(K/\delta)\}$ and $\Jt = \max\{3, |\Jtcal|\}$. On event $\mathcal{Z}$ we have that $\frac{\nin + \gamma_i}{n\pie + \gamma_i} \leq \beta_i$ for all $i$. For $i \not \in \Jcal$ we have that $\frac{\nin}{\ninh} \leq \beta_i \leq 3 + \frac{6 \left(3 \ninh + 27\ln(4K/\delta)\right)}{\max\{\gamma_i, \half \ninh - 9 \ln(4K/\delta)\}} \leq 100$ and thus 
\begin{align*}
    & \sumK \gamma_i \log(p_{n/2+1}/\ppi) + \sumK \log\left(\frac{\nin+ \gamma_i}{\ninh+\gamma_i}\right) \\
    & \leq \sum_{i \in \Jcal} \log\left(\frac{\nin + 1}{n\pie + 1}\right) + \sum_{i \in [K]\setminus \Jcal} \log\left(\frac{\nin}{\ninh} \right) \\
    & \leq \sum_{i \in \Jcal} \log\left(\frac{\nin + 1}{n\pie + 1}\right) + K\log(100)\,.
\end{align*}
By a union bound and Bennet's inequality, for all $i \in [K]$, with probability at least $1 - \delta$, we have that 
\begin{align*}
    |\nin - n\pie| \leq \sqrt{2n\pie \log(K/\delta)} + \log(K/\delta)/3\,.
\end{align*}
On this event, for $i \not \in \Jtcal$, we have that
\begin{align*}
    |\nin - n\pie| \leq 2 n\pie\,,
\end{align*}
which implies that $\nin \leq 3n\pie$. On the same event, for $i \in \Jtcal$ we have that $\nin \leq 9 \log(K/\delta)$. Therefore, we have that
\begin{align*}
    & \sum_{i \in \Jcal} \log\left(\frac{\nin + 1}{n\pie + 1}\right) + K\log(100) \\
    & \leq \sum_{i \in \Jtcal} \log\left(\frac{\nin + 1}{n\pie + 1}\right) + 2K\log(100)\,,
\end{align*}
where we used that for $i \not \in \Jcal$ $\nin \geq 32\log(K/\delta)$.
Let $\p_{\min} = \min_{i \in [K]} \pie$. Suppose that $\sum_{i \in [K] \setminus \Jtcal} \pie > \frac{1}{3}$. Let $\tilde{X}_i \sim B(n, \min\{1, 3\pie\})$. Let $x \geq 1$ and $z \geq 2 \Jt \exp(-\tfrac{1}{4} {x(3n\p_{\min} + 1)})$. By Lemma~\ref{lem:multitobino} we have that 
\begin{align*}
    \Pp(\sum_{i \in \Jtcal} \id\{\nin > 3n\pie + x(3 n\pie + 1)\} \geq z) \leq \Pp(\sum_{i \in \Jtcal} \id\{\tilde{X}_i > 3n\pie + x(3 n\pie + 1)\} \geq z)  
\end{align*}
By Bernstein's inequality, we have that 
\begin{align*}
    \Pp(\tilde{X}_i > 3n\pie + x(3 n\pie + 1)) & \leq \exp\left(-\frac{\half (x(3 n\pie + 1))^2}{3n\pie + \tfrac{1}{3}x(3 n\pie + 1)}\right) \\
    & \leq  \exp\left(- \tfrac{1}{4} {x(3np_{\min} + 1)}\right)
\end{align*}
Let $Z \sim B(\Jt, \exp(-\tfrac{1}{4} {x(3np_{\min} + 1)}))$. By Bernstein's inequality, we have that
\begin{align*}
    & \Pp(\sum_{i \in \Jtcal} \id\{\tilde{X}_i > 3n\pie + x(3 n\pie + 1)\} \geq z) \\
    & = \Pp (Z \geq z) \\
    & = \Pp(Z - \E[Z] \geq z - \E[Z]) \\
    & \leq \Pp(Z - \E[Z] \geq \half z) \\
    & \leq \exp\left(-\frac{1}{8}\frac{z^2}{\Jt\exp(- \tfrac{1}{4} {x(3np_{\min} + 1)}) + 1/6 z}\right) \\
    & \leq \exp\left(-\frac{z}{24}\right)
\end{align*}
Let $\mathcal{Z} = \{i \in \Jtcal: \nin > 3n\pie + x(3 n\pie + 1)\}$.  With probability at least $1 - \exp(-z/24)$ we have that $|\mathcal{Z}| \leq z$ and thus 
\begin{align*}
    \leq \sum_{i \in \Jtcal} \log\left(\frac{\nin + 1}{n\pie + 1}\right) + 2K\log(100)
    & \leq \sum_{i \in \mathcal{Z}} \log\left(\frac{\nin + 1}{n\pie + 1}\right) + (\Jt - |\mathcal{Z}|)\log(6x) + 2K \log(100) \\
    & \leq z \log(400(1 + \log(K/\delta)))+ (\Jt- |\mathcal{Z}|)\log(6x) + K \log(100)\,.
\end{align*}

Setting $z = \max\{24 \log(1/\delta), 2\Jt\exp(-\tfrac{x}{4}(3n \p_{\min} + 1))\}$ and $x = 4\left(\log\left(\frac{\Jt}{\log(1/\delta)}\right)\vee 1\right)$ we find that with probability at least $1-2\delta$
\begin{align*}
    & \sumK \gamma_i \log(p_{n/2+1}/\ppi) + \sumK \log\left(\frac{\nin+ \gamma_i}{\ninh+\gamma_i}\right) \\
    & \leq 24\log(1/\delta) \log(400(1 + \log(K/\delta)))+ \Jt\log\left(24\log\left(\log\left(\frac{\Jt}{\log(1/\delta)}\right) \vee 1\right)\right) + K \log(100)\,.
\end{align*}
Now, suppose that $\sum_{i \in [K] \setminus \Jtcal} \pie < \frac{1}{3}$. In that case we can always construct a two sets $R_1 \subset \Jtcal$ and $R_2 \subset \Jtcal$ such that $R_1 \bigcap R_2 = \Jtcal$, $\sum_{i \not \in R_1} \pie \geq \frac{1}{3}$ and  $\sum_{i \not \in R_2} \pie \geq \frac{1}{3}$. With these two sets we can repeat the analysis for the first case, where we apply Lemma~\ref{lem:multitobino} to $R_1$ and $R_2$ separately, except now we need a union bound to combine the analyses. Ultimately, we find that, with probability at least $1 - 3\delta$
\begin{align*}
    & \sumK \gamma_i \log(p_{n/2+1}/\ppi) + \sumK \log\left(\frac{\nin+ \gamma_i}{\ninh+\gamma_i}\right) \\
    & \leq 48\log(1/\delta) \log(400(1 + \log(K/\delta))) + 2\Jt\log\left(24\left(\log\left(\frac{\Jt}{\log(1/\delta)}\right) \vee 1\right)\right) + 4 K \log(100) \,.
\end{align*}
At this point we can use $\ln(K/\delta) \leq J$ to complete the proof. 
\end{proof}

\section{Additional Proofs for Section~\ref{sec:lowerbounds}}\label{app:lowerbounds}

\thmDeltaDependentlb*
\begin{proof}
  We construct a distribution $\p$ that is hard for $\phat$ as follows:
  for $i' = \argmin_{i\in\{2,\ldots,K\}}\phat^0(i)$ let
  \[
    \p(i) = \begin{cases}
      1-\frac{\alpha}{n} & \text{if $i = 1$}\\
      \frac{\alpha}{n} & \text{if $i = i'$}\\
      0                & \text{otherwise,}
    \end{cases}
  \]
  where $0 < \alpha \leq n/2$ will be chosen later.
  For this choice of $\p$, using that $\ln(1-x) \ge -x-x^2$ for
  $0<x<\frac{1}{2}$, we have that
\begin{align*}
    \Pp(n_1=1) &= \pa{1-\frac{\alpha}{n}}^n = \exp \pa{n\ln\pa{1-\frac{\alpha}{n}}}\ge \exp\pa{ -\alpha - \frac{\alpha^2}{n}}
     >\exp \pa{-\frac{3\alpha}{2}},
\end{align*}
where the last inequality follows from $n> 2\alpha$, and the event
$n_1=n$ is equivalent to $X_1 = \cdots = X_n = 1$. By choosing $\alpha =
\frac{2}{3}\log(1/\delta)$, we find that
\[
  \Pp(n_1=n) > \delta.
\]

On the event that $n_1=n$, the KL divergence of $\p$ from $\phat$ is 
\begin{align*}
    \KL(\p\|\phat) & = \frac{\alpha}{n}\ln\left(\frac{\alpha}{n \phat^0(i')}\right) + \pa{1 - \frac{\alpha}{n}}\ln\left(\frac{1 - \frac{\alpha}{n}}{1 - \sum_{i = 2}^{K}\phat^0(i)}\right) \\
    & \geq \frac{\alpha}{n}\ln\left(\frac{\alpha}{n \phat^0(i')}\right) + \sum_{i = 2}^{K}\phat^0(i) - \frac{\alpha}{n}\,,
\end{align*}
where the last inequality follows by $\log(x) \ge 1-1/x$ for $x>0$.
By the definition of $i'$, we get that $\phat^0(i') \le \sum_{i=2}^{K}\frac{\phat^0(i)}{(K-1)}$, which gives us
\begin{align*}
    \KL(\p\|\phat) & \geq \frac{\alpha}{n}\ln\left(\frac{\alpha}{n \phat^0(i')}\right) + \sum_{i = 2}^{K}\phat^0(i) - \frac{\alpha}{n}\\
    & \ge \frac{\alpha}{n}\ln\left(\frac{\alpha (K-1)}{n\sum_{i=2}^{K}\phat^0(i)}\right) + \sum_{i = 2}^{K}\phat^0(i) - \frac{\alpha}{n}.
\end{align*}
Recalling that $\alpha = \frac{2}{3}\ln(1/\delta)$, we obtain the
following lower bound:
\begin{align*}
    \KL(\p\|\phat) & \geq \frac{2\ln(1/\delta)}{3n}\left(\ln\left(\frac{2\ln(1/\delta)(K-1)}{3n \sum_{i = 2}^{K}\phat^0(i)}\right) - 1 \right) + \sum_{i = 2}^{K}\phat^0(i).
\end{align*}
All in all, we have shown that there exists a $\p$, dependent on $\phat$, such
that
\begin{align*}
    \Pp\Big(\KL(\p\|\phat) & \geq \frac{2\ln(1/\delta)}{3n}\left(\ln\left(\frac{2\ln(1/\delta)(K-1)}{3n \sum_{i = 2}^{K}\phat^0(i)}\right) - 1 \right) + \sum_{i = 2}^{K}\phat^0(i)\Big) >\delta\,.
\end{align*}
\end{proof}

\thmLogKLowerBound*
\begin{proof}
  Consider the distributions corresponding to \eqref{eqn:hard_dists}. For each
  of them the probability of the all ones sequence is
  \[
    P_j^n(X_1 = \cdots = X_n = 1) = (1-\tfrac{\alpha}{n})^n > \delta,
  \]
  where the inequality holds because $\ln(1+z) \geq z - z^2$ for $z \geq
  -1/2$, so that
  \[
     \log\Big(1-\frac{\alpha}{n}\Big) 
     \geq -\frac{\alpha}{n} - \Big(\frac{\alpha}{n}\Big)^2
     = -\frac{\alpha}{n} \Big(1 + \frac{\alpha}{n}\Big)
     > -\frac{3\alpha}{2 n} = \frac{\log \delta}{n},
  \]
  where we have used that $n > 2\alpha$ by assumption.

  For an arbitrary weak test $\Psi$, let $j^*$ be the output of $\Psi$
  on the all ones sequence. Then condition~\eqref{eqn:weak_test_fails}
  of Lemma~\ref{lem:weak_testing} is fulfilled because
  \[
    P_{j^*}^n(\Psi = j^*) \geq P_{j^*}^n(X_1 = \cdots = X_n = 1) =
    (1-\tfrac{\alpha}{n})^n > \delta.
  \]
  The lemma therefore tells us that
  \[
    r_n^*(\delta)
      \geq \inf_P \max_j \KL(P_j \| P)
      \geq \inf_P \frac{1}{K-1} \sum_{j=2}^K \KL(P_j \| P)
      = \frac{1}{K-1} \sum_{j=2}^K \KL(P_j \| \bar P),
  \]
  where $\bar P = \frac{1}{K-1} \sum_{j=2}^K P_j$. Since, for any $j$,
  \[
    \KL(P_j \| \bar P)
      = \frac{\alpha}{n} \log \frac{\alpha/n}{\alpha / n / (K-1)}
      = \frac{\alpha}{n} \log (K-1)
      = \frac{2 \log (K-1) \log(1/\delta)}{3 n},
  \]
  we conclude that $r_n^*(\delta) \geq \frac{2 \log (K-1)
  \log(1/\delta)}{3 n}$, as required.

\end{proof}
\lemTsybakovReductionForKL*

\begin{proof}
  Let $\hat P$ be any estimator, and let $\Psi^* = \argmin_{j \in [M]}
  \KL(P_j \| \hat P)$ (breaking ties arbitrarily) be the corresponding
  minimum KL divergence hypothesis test. Then, for any $j$, we have
  \[
    \KL(P_{\Psi^*} \| \hat P) \leq \KL(P_j \| \hat P).
  \]
  Hence, on the event that $\Psi^* \neq j$,
  \begin{align*}
    \KL(P_j \| \hat P)
      &\geq \half \KL(P_j \| \hat P) + \half \KL(P_{\Psi^*} \| \hat P)
      \geq \min_P \Big\{\half \KL(P_j \| P)
          + \half \KL(P_{\Psi^*} \| P)\Big\}\\
      &= \half \KL\Big(P_j \| \frac{P_j + P_{\Psi^*}}{2}\Big)
          + \half \KL\Big(P_{\Psi^*} \| \frac{P_j + P_{\Psi^*}}{2}\Big)
      \geq s_n,
  \end{align*}
  where the last inequality uses \eqref{eqn:pairwise_separation}. Thus,
  for all $j$,
  \[
    P_j^n\big(\KL(P_j \| \hat P) \geq s_n\big) \geq P_j^n\big(\Psi^* \neq
      j\Big).
  \]
  Taking the maximum over $j$ we find that
  \[
    \max_j P_j^n\big(\KL(P_j \| \hat P) \geq s_n\big)
      \geq \max_j P_j^n\big(\Psi^* \neq j\Big)
      \geq \inf_\Psi \max_j P_j^n\big(\Psi \neq j\Big)
      > \delta,
  \]
  where the last inequality holds by assumption \eqref{eqn:test_fails}.
  The result follows by taking the infimum over~$\hat P$.
\end{proof}

\lemWeakTesting*

\begin{proof}
  Let $\hat P$ be any estimator, and let $\hat \Psi = \argmax_{j \in [M]}
  \KL(P_j \| \hat P)$ (breaking ties arbitrarily) be the corresponding
  maximum KL divergence weak hypothesis test. Then we have
  \[
    \KL(P_{\hat \Psi} \| \hat P) = \max_j \KL(P_j \| \hat P) \geq s_n,
  \]
  for $s_n$ as defined in \eqref{eqn:weak_test_lower_bound}. Hence, for
  any $j$,
  \[
    P_j^n\Big(\KL(P_j \| \hat P) \geq s_n\Big)
    \geq P_j^n\big(\hat \Psi = j\big).
  \]
  Taking the maximum over $j$, we find that
  \begin{align*}
    \max_j P_j^n\Big(\KL(P_j \| \hat P) \geq s_n \Big)
      &\geq \max_j P_j^n\big(\hat \Psi = j\big)\\
    \inf_{\hat P} \max_j P_j^n\Big(\KL(P_j \| \hat P) \geq s_n \Big)
      &\geq \inf_\Psi \max_j P_j^n\big(\Psi = j\big)
    > \delta,
  \end{align*}
  where the last inequality holds by
  assumption~\eqref{eqn:weak_test_fails}.
\end{proof}

\lemCover*

\begin{proof}
  We will first consider the upper bound on
  $N(\model_0,\epsilon,\textnormal{KL})$ before proving the lower bound
  on $M(\model_0,\epsilon,V)$.

  \paragraph{Upper Bound:}

  Let $\chi^2(p,q) = \sum_{i=1}^K \frac{(p(i) -
  q(i))^2}{q(i)}$ denote the $\chi^2$ divergence. Then
  $\KL(p\|q) \leq \chi^2(p,q)$ for any $p,q$ in $\model$
  \citep{Tsybakov2009}. Consequently, for any $p,q \in \model_0$,
  \[
    \KL(p\|q)
      \leq \chi^2(p,q)
      = \sum_{i=1}^K \frac{(p(i) - q(i))^2}{q(i)}
      = 2K \sum_{i=1}^K (p(i) - q(i))^2
      = 2K \|p - q\|_2^2.
  \]
  We may equate $\model_0$ with $B_2(\alpha,u) = u + \alpha B_2$, where 
  $B_2$ is the $\ell_2$ unit ball in $\reals^K$. Then by convexity
  of $B_2$, any optimal $\ell_2$ $\epsilon$-cover of $B_2(\alpha,u)$
  will consist of points inside $B_2(\alpha,u)$ and therefore induces a
  KL $(2K\epsilon^2)$-cover of $\model_0$. Hence we get the following
  upper bound on the covering entropy:
  \begin{align*}
    N(\model_0,2K\epsilon^2,\textnormal{KL})
      &\leq N(B_2(\alpha,u),\epsilon,\|\cdot\|_2)
      = N(\alpha B_2,\epsilon,\|\cdot\|_2)\\
      &= N(B_2,\frac{\epsilon}{\alpha},\|\cdot\|_2)
      \leq K \log \Big(\frac{2\alpha}{\epsilon} + 1\Big).\\
    N(\model_0,\epsilon^2,\textrm{KL}) &\leq K \log \Big(\frac{\alpha
    2\sqrt{2K}}{\epsilon} + 1\Big).
  \end{align*}

  \paragraph{Lower Bound:}

  Since the packing entropy $M(\model_0,\epsilon,V)$ is always an upper
  bound on the covering entropy $N(\model_0,\epsilon,V)$, it is
  sufficient to find a lower bound on the covering entropy. Let $B_1$ be
  the unit $\ell_1$ ball in $\reals^K$ and note that $V(p,q) =
  \|p-q\|_1$. Then
  \[
    M(\model_0,\epsilon,V)
      \geq N(\model_0,\epsilon,V)
      = N(\alpha B_2,\epsilon,\|\cdot\|_1).
  \]
  In order to construct a cover of $\alpha B_1$ by $m$ $\ell_1$-balls of
  radius $\epsilon$, we need that the total volume $m \Vol(\epsilon
  B_1)$ of the balls in the cover is at least the volume $\Vol(\alpha
  B_2)$ of the set being covered, so that
  \[
    N(\alpha B_2,\epsilon,\|\cdot\|_1)
      \geq \log \frac{\Vol(\alpha B_2)}{\Vol(\epsilon B_1)}
      = \log \frac{\alpha^K \Vol(B_2)}{\epsilon^K \Vol(B_1)}.
  \]
  The volume of the unit $\ell_p$ ball (for $p\geq 1$) in
  dimension $K$ is
  \[
    \Vol(B_p)
      = 2^K
      \frac{\Gamma\Big(1 + \frac{1}{p}\Big)^K}
           {\Gamma(1 + \frac{K}{p})}.
  \]
  Consequently (using that $\Gamma(2) = 1$ and $\Gamma(3/2) =
  \sqrt{\pi}/2$),
  \begin{align*}
    \log \frac{\Vol(B_2)}{\Vol(B_1)}
    &= \log \frac{\Gamma(1 + K)}{\Gamma(1 + \frac{K}{2})}
    - K \log \frac{2}{\sqrt{\pi}}\\
    &\geq \frac{K}{2} \log \frac{K}{2}
    - K \log \frac{2}{\sqrt{\pi}}
    = \frac{K}{2} \log \frac{K \pi}{8}.
  \end{align*}
  Putting all inequalities together, we conclude that
  \[
    M(\model_0,\epsilon,V)
    \geq
    K \log \frac{\alpha}{\epsilon}
    + \frac{K}{2} \log \frac{K \pi}{8},
  \]
  as claimed.
\end{proof}

\ThmParametricLowerBound*

\begin{proof}
  By Lemma~\ref{lem:covering_numbers}, we can apply
  Theorem~\ref{thm:YangBarron} with
  \begin{align*}
    N(\epsilon)
      &= K \log \Big(\frac{\alpha 2\sqrt{2K}}{\epsilon} + 1\Big),
      &
    M(\epsilon)
      &= K \log \frac{\alpha}{\epsilon}
         + \frac{K}{2} \log \frac{K \pi}{8}.
  \end{align*}
  Let $\alpha = C_1/\sqrt{n}$ for $C_1 \in (0,\frac{\sqrt{n}}{2K}]$ to
  be determined.
  Then
  \[
    \epsilon_n^2
      = \frac{N(\epsilon_n)}{n}
      = \frac{K}{n} \log \Big(\frac{C_1 2\sqrt{2K}}{\epsilon_n \sqrt{n}}
      + 1\Big)
  \]
  has solution $\epsilon_n = C_2 \sqrt{\frac{K}{n}}$ for $C_2 > 0$ such that
  \[
    C_2^2 = \log \Big(\frac{C_1 2\sqrt{2}}{C_2} + 1\Big).
  \]
  Now we find $\loweps_n = C_3  \sqrt{K} \alpha$ for some $C_3 > 0$ such
  that
  \begin{align*}
    M(\loweps_n) &\geq 4n\epsilon_n^2 + 2 \log 2\\
    K \log \frac{\alpha}{\loweps_n} + \frac{K}{2} \log \frac{K \pi}{8}
      &\geq 4 C_2^2 K + 2 \log 2\\
      K \log \frac{1}{C_3\sqrt{K}} + \frac{K}{2} \log \frac{K \pi}{8}
      &\geq 4 C_2^2 K + 2 \log 2\\
      \frac{K}{2} \log \frac{\pi}{8}
      &\geq (4 C_2^2 + \log C_3) K + 2 \log 2,
  \end{align*}
  for which it is sufficient if
  \[
    \frac{1}{2} \log \frac{\pi}{32}
      \geq 4 C_2^2 + \log C_3.
  \]
  The constants cannot be optimized in closed form to maximize
  $\loweps_n = C_1 C_3 \sqrt{\frac{K}{n}}$, but a reasonable choice is to take $C_1 =
  \frac{\sqrt{\log 2}}{2\sqrt{2}}$ (which falls in the allowed range by
  assumption on $n$) such that $C_2 = \sqrt{\log 2}$. This leads to $C_3
  = \exp\Big(\frac{1}{2} \log \frac{\pi}{32} - 4 \log 2\Big)$.

  Having satisfied the conditions of Theorem~\ref{thm:YangBarron}, it
  tells us that
  \[
    \inf_{\hat p} \sup_{\p \in \model_0} \Pr_{\p}n\Big(V(\p,\hat p) \geq
    \frac{C_1 C_3}{2}\sqrt{\frac{K}{n}}\Big) \geq \frac{1}{2}.
  \]
  By Pinsker's inequality this implies that
  \[
    \inf_{\hat p} \sup_{\p \in \model_0} \Pr_{\p}\Big(\KL(\p\|\hat p) \geq
    \frac{C_1^2 C_3^2}{8}\frac{K}{n}\Big) \geq \frac{1}{2} > \delta,
  \]
  and therefore $r_n^*(\delta) \geq C\frac{K}{n}$ for $C = \frac{C_1^2
  C_3^2}{8} \approx 4.1 \times 10^{-6}$. 
\end{proof}

\end{document}